\newcommand{\R}{\mathbb{R}}
\newcommand{\hmu}{\hat{\mu}_{n}}
\newcommand{\Pw}{\mathcal{P}_{2}\left( \R^{d+2} \right) }
\newcommand{\PPw}{\mathcal{P}_{2}\left(X\right)}
\newcommand{\btheta}{\bm{\theta}}
\newcommand\scalemath[2]{\scalebox{#1}{\mbox{\ensuremath{\displaystyle #2}}}}
\DeclareMathOperator*{\argmin}{argmin}
\theoremstyle{plain} 
\newtheorem{proposition}{Proposition}
\title{Mean-Field Model for Two-Layer Neural Networks Trained with Consensus-Based Optimization}
\author[1,2]{William De Deyn}
\author[1,3]{Michael Herty}
\author[2]{Giovanni Samaey}
\affil[1]{Institut f{\"u}r Geometrie und Praktische Mathematik, RWTH Aachen University, Germany}
\affil[2]{Department of Computer Science, KU Leuven, Belgium}
\affil[3]{Extraordinary Professor, Department of Mathematics and Applied Mathematics, University of Pretoria, South Africa}
\date{\today}
\begin{document}

	\maketitle

	\begin{abstract}
	\noindent
	We study Consensus-Based Optimization (CBO) for two-layer neural network training. We compare the performance of CBO against Adam on two test cases and demonstrate how a hybrid approach, combining CBO with Adam, provides faster convergence than CBO. Additionally, in the context of multi-task learning, we recast CBO into a formulation that offers less memory overhead. The CBO method allows for a mean-field model formulation, which we couple with the mean-field model of the neural network. To this end, we first reformulate CBO within the optimal transport framework. As the number of particles tends to infinity, we lift the corresponding dynamics to the Wasserstein-over-Wasserstein space and show that the variance decreases monotonically. We confirm numerically that both mean-field models converge.\\
	\\
	\noindent
	\textbf{Keywords:} optimization, neural networks, mean-field model, particle methods, optimal transport
	\end{abstract}

	\section{Introduction}
	Artificial Intelligence has witnessed remarkable progress over the past decades, both in its capabilities and its range of applications. Today, neural networks are present in a variety of fields. One classical application is function approximation, which is supported by the universal approximation theory~\cite{hornik1989}. In computer vision, convolutional neural networks form the backbone of most modern architectures~\cite{lecun2015,krizhevsky2017}, while the framework of neural ordinary differential equations has contributed significantly to optimal control problems~\cite{chen2018,bottcher2022}. In natural language processing and speech recognition, recurrent neural networks and the long short-term memory variants have yielded significant performance improvements~\cite{hochreiter1997,rumelhart1986}. More recently, diffusion models have illustrated to be powerful generative models, with applications ranging from image denoising to video generation~\cite{yang2024}. Neural networks have even found their way into scientific computing. The most notable example is physics-informed neural networks, which are capable of solving both forward and inverse problems governed by partial differential equations~\cite{raissi2019}. 

	A neural network can be viewed, in general, as a function parametrized by a set of weights and biases, which we collectively refer to as parameters. A two-layer neural network, for example, is written as
	\begin{equation}\label{intro:nn_matrix}
		\hat{g}(\bm{x}; \bm{\theta}) = \bm{c}^{\top} \sigma \left( W\bm{x} + \bm{b} \right), 
	\end{equation}
	with $ \bm{x} \in \R^{d}, W \in \R^{M \times d}, \bm{b}, \bm{c} \in \R^{M}$ and $ \sigma(x) : \R \to \R $ the activation function. Here, \mbox{$\bm{\theta}  = \left(W, b, c\right) \in \R^{d_{o}}$} represents a vector containing all parameters of the neural network, with $ d_{o} = M(d+2) $ denoting the optimization dimension. A key factor behind the success of neural networks across diverse applications is their ability to approximate highly complex functions by suitable choice of $ \bm{\theta} $. The process of finding the best parameters $ \bm{\theta} $ is more commonly known as training the neural network. Training is typically formulated as the minimization of the empirical risk function
	\begin{equation}\label{intro:emp_risk}
		\hat{R}(\bm{\theta}) = \frac{1}{S}\sum_{s=1}^{S} \mathcal{L}\left(y_{s}, \hat{g} \left( \bm{x}_s; \bm{\theta} \right)\right), 
	\end{equation}
	where $ \mathcal{L} $ represents a loss function and $ S $ the number of data points~\cite{vapnik1999}. The loss function denotes the discrepancy between a true sample $y_{s}$ and the model prediction made from the input sample $\bm{x}_{s}$. Currently, the standard method to find the minimizer of \eqref{intro:emp_risk} is stochastic gradient descent (SGD) or adaptive variants, such as Adam~\cite{bottou-98x,kingma2015adam}. The optimization of the risk function is a difficult task~\cite{goodfellowDeepLearning2016,glorot2010}, primarily because the objective function is in general nonconvex. Gradient-based methods, such as SGD, are susceptible to becoming trapped in local minima. Other well-known difficulties include the vanishing gradient and exploding gradient phenomena~\cite{pascanu2013,bengio1994}. One alternative approach is to apply particle-based methods, such as Consensus-Based Optimization (CBO). The CBO method is a gradient free, global optimization method designed for high-dimensional, nonconvex objective functions~\cite{pinnauConsensusbasedModelGlobal2017a, carrillo2018, carrillo2021, gerber2025, carrillo2022b, huang2022, kossMeanFieldLimit2024a, fornasier2022, fornasier2024, fornasier2021, borghi2023, carrillo2023, borghi2022,ha2020, ha2021}. The CBO method allows for the passage to the mean-field limit as the number of particles tends to infinity.

	In this paper, we study the feasibility of training neural networks using \mbox{Consensus-Based Optimization} and compare its performance against the popular Adam method. We propose a hybrid method, combining CBO and Adam, and illustrate that the hybrid method has fast and stable convergence. Owing to the natural mean-field formulation of CBO, we also study two mean-field models in this work. We first consider the mean-field model of two-layer neural networks, similar to the works~\cite{mei2018,nguyen2023,sirignano2020,sirignano2020a,bach2022, Chizat2022MeanField}. Next, in the same spirit, we derive a time-discrete mean-field model governing the CBO dynamics on infinitely wide neural networks. The mean-field model is useful both conceptually and methodologically. Conceptually, it provides a probabilistic representation of training in the infinite-width regime by casting the problem as optimization over measures. Methodologically, this framework enables a rigorous analysis of particle approximations: we show, for example, that the variance of the particle population decreases along iterations and converges to zero under mild regularity and optimality assumptions.

	The remainder of this paper is organized as follows. Section \ref{section2} reviews the Barron space, which is a function space containing functions that can be efficiently represented by two-layer neural networks. Next, in Section \ref{section3}, we review the general training problem in machine learning and shortly discuss Adam and Consensus-Based Optimization. We also present two variants of CBO, such as the hybrid method and Multi-Task CBO. In Section \ref{section4}, we cover numerical experiments comparing the performance of CBO to Adam in training neural networks. Additionally, we provide an experiment illustrating the capabilities of Multi-Task CBO. Section \ref{section5} establishes the mean-field models in two regimes: first, as the number of hidden neurons tends to infinity; and second as the number of optimization particles tends to infinity. In the latter regime, we prove that the variance of the particle ensemble decreases 
	monotonically. We confirm both mean-field models numerically by showing that the empirical risk decreases monotonically in the network width and in the number of particles. Finally, Section \ref{section6} summarizes the main findings and presents the conclusion.
	
	\section{The Barron Space}\label{section2}
	
	In this section, we introduce the Barron space, which is a function space suited for approximation by two-layer neural networks. First, we rewrite and scale the neural network formulation in \eqref{intro:nn_matrix} by considering the two-layer neural network as a function $\hat{g}_{M}(\bm{x};\theta): X \to \mathbb{R}$ of the form
	\begin{equation}\label{intro:disc_nn}
	\hat{g}_{M}(\bm{x};\bm{\theta}) = \frac{1}{M}\sum_{m=1}^M c_m\sigma(\bm{w}_m^\top \bm{x} + b_m),
	\end{equation}
	with $\bm{x} \in X \subseteq \R^{d}, \bm{w}_{m} \in \R^{d}, b_m, c_m \in \R$~\cite{eBarronSpaceFlowInduced2022}. The parameter $M \in \mathbb{N}$ denotes the number of neurons in the hidden layer, i.e., the width of the two-layer neural network. The neural network is parameterized by weights $w_{m, j}, c_{m}$ and biases $b_{m}$, which we collect into the parameter vector \mbox{$\bm{\theta} \coloneq \{w_{m,j}, b_m, c_m \}_{m=1}^{M},~j=1, \dots, d$}. For an empirical measure $\hat{\mu} \in \mathcal{P}_{2}(\R^{d+2})$ defined by
	\begin{equation}
		\hat{\mu}(\bm{w}, b, c) = \frac{1}{M}\sum_{m=1}^{M} \delta(\bm{w} - \bm{w}_m)\delta(b - b_m)\delta(c - c_m), 
	\end{equation}
	the neural network in \eqref{intro:disc_nn} can be rewritten as a function $ \hat{g}_M(\bm{x}): X \to \R$ 
	\begin{equation}\label{intro:integral_disc_nn}
	\hat{g}_{M}(\bm{x}) = \int_{\Omega} c \sigma(\bm{w}^\top \bm{x} + b) d\hat{\mu}(\bm{w}, b, c), \qquad \Omega = \R^d \times \R \times \R,
	\end{equation}
	and where $\Omega$ represents our parameter space. We restrict the representing measures to $ \Pw$, following~\cite{chizat2020implicit}. In the mean-field regime of neural networks, we consider the limit $M \to \infty$. For an arbitrary probability measure $ \mu \in \Pw $, the mean-field limit yields a representation of a neural network as
	\begin{equation}\label{intro:cont_nn}
			g(\bm{x}) = \int_{\Omega} c \sigma(\bm{w}^\top \bm{x} + b) d\mu(\bm{w}, b, c) = \mathbb{E}_{\mu}\left[ c \sigma(\bm{w}^\top \bm{x} + b)\right].
	\end{equation}

	For functions of the form \eqref{intro:cont_nn} with a RELU activation function, the Barron norm is defined as
	\begin{equation}\label{intro:barron_norm}
		\|g \|_{\mathcal{B}} \coloneq \inf_{\mu \in \Pw} \max_{\left( \bm{w},b,c \right) \in \text{supp}\left( \mu \right) } |c| \left( \|\bm{w} \|_{1} + |b|\right).
	\end{equation}
	Functions with a finite Barron norm \eqref{intro:barron_norm} form the Barron space~\cite{barronUniversalApproximationBounds1993,eBarronSpaceFlowInduced2022}. The Barron norm can be bounded, with $ G $ denoting the Fourier transformation of $ g $, as 
	\begin{equation}
		\|g \|_{\mathcal{B}} \leq 2\inf_{G} \int_{\R^{d}} \| \omega \|^{2}_{1} \left| G(\omega) \right| d\omega + 2 \|\nabla g(0) \|_{1} + 2  | g(0) |,
	\end{equation}
	which shows that sufficiently regular functions are in the Barron space. Two-layer neural networks $ \hat{g}_{M} $ can approximate functions $ g $ in the Barron space up to an arbitrary precision, where the approximation error satisfies
	\begin{equation}\label{intro:approx_bound}
		\| g \left( \cdot \right) - \hat{g}_{M}\left( \cdot; \bm{\theta} \right)  \|^{2} \leq \frac{3\|g \|^{2}_{\mathcal{B}}}{M}.
	\end{equation}
	This bound implies an $\mathcal{O}(M^{-1/2})$ approximation rate in the function-space
	norm, established in~\cite{barronUniversalApproximationBounds1993} (Theorem~1) and
	extended in~\cite{eBarronSpaceFlowInduced2022} (Theorem~1).

	With the measure-based viewpoint of neural networks, introduced in Eq.~\eqref{intro:cont_nn}, we can consider continuous optimization schemes and analyse the training of infinitely wide neural networks. Section \ref{section5} elaborates further on a particle-based method for optimizing infinitely wide neural networks.


    \section{Optimization Methods}\label{section3}
	In this section, we formulate the general optimization problem underlying neural network training. Next, we present the Adam optimizer and the Consensus-Based Optimization method. Lastly, we introduce two variants of the CBO method, namely a hybrid method that combines Adam and CBO and a Multi-Task CBO formulation.  

	\subsection{General Training Problem}
	The central goal in machine learning is to approximate an unknown function $ g: X \to Y$ based on observed data. We assume that $ g $ can be well represented by two-layer neural networks with parameters $ \btheta $. Training the neural network then amounts to finding parameters $ \bm{\theta} $ such that neural network $ \hat{g}\left( \bm{x} ; \btheta \right)$ approximates the unknown function as accurately as possible. 
	
	The quality of the approximation is typically measured with a loss function $ \mathcal{L} $, which quantifies the discrepancy between the neural network prediction $ \hat{\bm{y}}_{s} = \hat{g}\left( \bm{x}_{s};\bm{\theta}\right)  $ and the observed label $ \bm{y}_{s} $. From a theoretical perspective, training a neural network can be understood as minimizing the risk function $ R(\btheta) $. The risk function is defined as the expected loss with respect to a data distribution $ P(\bm{x},\bm{y}) $:
	\begin{equation}
		R(\btheta) = \int_{X \times Y} \mathcal{L}(\bm{y}, \hat{g}(\bm{x};\btheta)) dP(\bm{x},\bm{y}).
	\end{equation}
	In practice, the true data distribution is not known. Therefore, the risk function $ R(\btheta) $ is approximated by the empirical risk
	\begin{equation}\label{eq:empirical_risk}
		\hat{R}\left( \bm{\theta} \right) = \frac{1}{S}\sum_{s=1}^{S}\mathcal{L}\left( \bm{y}_{s},\hat{g}(\bm{x}_{s};\btheta) \right),
	\end{equation}
	where $\left\{\left( \bm{x}_{s},\bm{y}_{s} \right)\right\}_{s=1}^{S}$ denotes the training dataset~\cite{vapnik1999}.

	The choice of $ \mathcal{L} $ depends on the particular task. For regression, where $ Y = \R $, the squared error loss
	\begin{equation}\label{eq:MSE}
		\mathcal{L}\left( y,\hat{y}\right) = \left( y - \hat{y} \right)^2,
	\end{equation}
	leads to the Mean Squared Error (MSE) risk~\cite{goodfellowDeepLearning2016}. For classification with $ C $ classes, the standard choice is the cross-entropy loss function
	\begin{equation}\label{eq:cross-entropy}
		\mathcal{L}\left( \bm{y},\hat{\bm{y}} \right) = - \sum_{c=1}^{C}y_{c}\log(\hat{y}_{c}),
	\end{equation}
	where $ y $ is a one-hot encoded label and $ \hat{y} $ a probability vector obtained after applying the softmax function to the output of the neural network~\cite{murphyMLProb2014}. There exist many other loss functions in machine learning, such as hinge loss, Huber loss and logistic loss.
	
	Training a neural network can thus be formulated as an optimization problem, where we aim to find
	\begin{equation}
		\btheta^* = \argmin_{\btheta} \hat{R}\left( \btheta \right), 
	\end{equation}
	with the empirical risk $ \hat{R}\left( \btheta \right)  $ defined as in Eq.~\eqref{eq:empirical_risk}. The empirical risk function is typically nonconvex due to the nonlinear structure of the neural networks~\cite{goodfellowDeepLearning2016}. Standard optimization methods in machine learning rely on the gradient of the empirical risk, with Adam being the most popular~\cite{kingma2015adam}. However, gradient-based methods tend to converge to local minima in nonconvex landscapes. Particle-based optimization methods, such as Consensus-Based Optimization, are known as global optimization methods, for which convergence to the global minimizer of certain nonconvex functions is theoretically guaranteed.
	
	\subsection{Gradient-Based Optimization}\label{gradient_based_opt}
	Nearly all neural networks are trained with  gradient-based methods. The most popular are by far Stochastic Gradient Descent (SGD) and adaptive variants such as Adam. In what follows, we provide a brief overview of these two methods, which form the foundation of modern neural network training.

	Stochastic Gradient Descent is an adaptation of the classical Gradient Descent method~\cite{goodfellowDeepLearning2016}. Gradient Descent is an iterative method that updates the model parameters in the direction of the negative gradient of the objective function~\cite{NocedalNumericalOptimization}. While classical Gradient Descent often employs a line search to determine an optimal step size, it is common in neural network training to use a fixed step size for simplicity. At iteration $ k $, the step direction $ \bm{d}^{k} $ is given by
	\begin{equation}\label{eq:gradient_descent}
		\bm{d}^{k} = \frac{1}{S}\sum_{s=1}^{S} \nabla_{\btheta} \mathcal{L}\left(\bm{y}_{s}, \hat{g}(\bm{x}_{s};\btheta^{k}) \right),
	\end{equation} 
	and the parameters are updated according to
	\begin{equation}\label{eq:optimization_method}
		\btheta^{k+1} =  \btheta^k - \Delta t \bm{d}^{k},
	\end{equation}
	where $ \Delta t > 0 $ denotes the time step or the learning rate. A major drawback of Gradient Descent is that computing the step direction at each iteration scales linearly with the dataset size, $\mathcal{O}(S)$. Datasets in machine learning can easily contain up to one million data points, making Gradient Descent computationally expensive. 
	
	SGD alleviates this issue by considering a minibatch of data points $ \{(\bm{x}_{s}, \bm{y}_{s}) \}_{s=1}^{S^\prime} $ drawn uniformly from the dataset. The minibatch size $ S^\prime \ll S$ is chosen before training. The step direction $ \bm{d}^{k} $ in SGD equals
	\begin{equation}\label{eq:stochastic_gradient}
		\bm{d}^{k} = \frac{1}{S^\prime}\sum_{s=1}^{S^\prime} \nabla_{\btheta} \mathcal{L}\left(\bm{y}_{s}, \hat{g}(\bm{x}_{s};\btheta^{k}) \right),
	\end{equation}
	which is an unbiased estimator of the full gradient \eqref{eq:gradient_descent}. At iteration $ k $, the gradient is computed using the current minibatch, and the parameters are subsequently updated. At the next iteration $ k+1 $, a new minibatch is sampled. One pass through the complete training dataset, where each data point has been used once for updating the parameters, is referred to as an epoch.

	The performance of SGD depends heavily on the choice of the learning rate $ \Delta t $. There is a trade-off: a larger learning rate yields faster progress, but is more unstable; a smaller learning rate improves stability but converges slower. In practice, the learning rate decays during training according to a predefined schedule. However, there also exist adaptive algorithms, such as Adam, that adapt the learning rate individually for each parameter based on the gradient history. Given the stochastic gradient $ \bm{d}^{k} $ from Eq.~\eqref{eq:stochastic_gradient}, Adam estimates the first and second moment as
	\begin{align}\label{eq:adam_moments}
		&\bm{s}^{k+1} = \beta_1 \bm{s}^k + (1 - \beta_1)\bm{d}^{k}
		&\bm{r}^{k+1} = \beta_2 \bm{r}^k + (1 - \beta_2)\bm{d}^{k} \odot \bm{d}^{k},
	\end{align}
	with the decay parameters $ \beta_1, \beta_2 \in [0,1)$ and where $ \odot $ represents the elementwise multiplication. The moment estimates are normalized to correct for the initial bias:
	\begin{align}\label{eq:adam_moments_normal}
		&\hat{\bm{s}}^{k+1} = \frac{\bm{s}^{k+1}}{1 - \beta_{1}^{k+1}}
		&\hat{\bm{r}}^{k+1} = \frac{\bm{r}^{k+1}}{1 - \beta_{2}^{k+1}}.
	\end{align}
	Finally, Adam updates the parameters $ \btheta $ as follows
	\begin{equation}
		\btheta^{k+1} = \btheta^{k} -\Delta t~\frac{\hat{\bm{s}}^{k+1}}{\sqrt{\hat{\bm{r}}^{k+1}}+\delta},
	\end{equation}
	with $ \delta $ a small constant for numerical stability. The initial values for the first and second moment estimates $ \bm{s}^{0} $ and $ \bm{r}^{0} $ are set to zero. By adapting learning rates per parameter, Adam typically achieves faster convergence than plain SGD. 
	
	\subsection{Consensus-Based Optimization}\label{cbo}
	We introduce Consensus-Based Optimization, a global optimization method well suited for nonconvex, nonsmooth objective functions~\cite{pinnauConsensusbasedModelGlobal2017a}. We aim to minimize the empirical risk \eqref{eq:empirical_risk}. To find the minimum, we consider an ensemble of $ N $ particles $ \bm{\theta}^{k}_{n} \in \R^{M(d+2)},~n=1, \dots, N$ with $ N \in \mathbb{N}$ at time step $ k $. The state of the particles evolves according to the discretized stochastic differential equation 
	\begin{equation}\label{cbo:sde}
		\bm{\theta}_{n}^{k+1} = \bm{\theta}_{n}^{k} -\lambda \Delta t \left(\bm{\theta}_{n}^{k} - \bm{V}^{k} \right) + \tilde{\sigma} \sqrt{\Delta t}\left( \bm{\theta}_{n}^{k} - \bm{V}^{k} \right)\odot\bm{\xi}_{n}^{k}, \qquad \bm{\xi}_{n}^{k} \sim \mathcal{N}\left( \bm{0}, \bm{\mathrm{I}} \right),
	\end{equation} 
	with $ \lambda, \tilde{\sigma} > 0$ representing the drift and diffusion parameters respectively. The dynamics combine a drift towards the consensus point with a diffusion term that promotes exploration. The consensus point $ \bm{V} \in \R^{M(d+2)}$ is calculated as a weighted average
	\begin{equation}\label{cbo:consensus_point}
		\bm{V}^{k} = \frac{\frac{1}{N}\sum_{n=1}^{N}\bm{\theta}^{k}_{n}\exp\left( -\alpha \hat{R}(\bm{\theta}^{k}_{n})\right)}{ \frac{1}{N}\sum_{n=1}^{N}\exp\left( -\alpha \hat{R}(\bm{\theta}^{k}_{n})\right)} = \sum_{n=1}^{N} \beta(\bm{\theta}^{k}_{n}) \bm{\theta}^{k}_{n}, \quad \beta(\bm{\theta}^{k}_{n}) = \frac{\exp(-\alpha \hat{R}(\bm{\theta}^{k}_{n}))}{\sum_{n=1}^{N}\exp(-\alpha \hat{R}(\bm{\theta}^{k}_{n}))},
	\end{equation}
	where the weight $ \beta(\bm{\theta}^{k}_{n}) $  depends on the relative performance of the particle in the optimization landscape. In statistical physics, the parameter $ \alpha > 0 $ represents the inverse temperature. The initial positions of the particles $ \bm{\theta}_{n}^{0} $ are distributed independently and identically according to a chosen initial distribution $ \rho^0 $.

	The computation of the consensus point $ \bm{V}^{k} $ requires evaluating the empirical risk for each particle $ \btheta_{n}^{k} $. This process can become computationally demanding when the number of particles or the size of the training set is large. To reduce the cost, a minibatch strategy analogous to that used in SGD can be applied within CBO. Let $ \{(\bm{x}_{s}, \bm{y}_{s}) \}_{s=1}^{S^\prime} $, $~ S^\prime \ll S  $, denote a minibatch of training data drawn uniformly from the training dataset. The corresponding minibatch empirical risk is then defined as
	\begin{equation}
		\hat{R}(\btheta_{n}^{k}) = \frac{1}{S^\prime} \sum_{s=1}^{S^\prime}\mathcal{L}\left( \bm{y}_{s},\hat{g}(\bm{x}_{s};\btheta_{n}^{k}) \right).
	\end{equation}
	At iteration $ k $, the consensus point is computed using the current minibatch, and the particle positions are subsequently updated. At the next iteration $ k+1 $, a new minibatch is sampled. One pass through the complete training dataset, where each data point has been used once for updating the particles, is referred to as an epoch.

	The consensus-based optimization scheme converges under certain conditions. In~\cite{carrillo2021}, the authors show that for high dimensional problems and anisotropic noise, we need $ 2\lambda > \tilde{\sigma}^2 $ to form consensus, which is independent of the optimization dimension. Secondly, we note that the consensus point in \eqref{cbo:consensus_point} is a finite particle discretization of the mean of a Gibbs-type measure. By the Laplace principle~\cite{dembo2010} 
	\begin{equation}
		\lim_{\alpha \to \infty} \left( - \frac{1}{\alpha} \log \left( \int \exp \left( -\alpha \hat{R}(\bm{\theta}) \right)\rho \left(\bm{\theta}\right) d\bm{\theta} \right) \right) = \min_{\theta} \hat{R}(\bm{\theta}),
	\end{equation}
	the Gibbs measure concentrates exponentially around the global minimizer of the objective function. Consequently, the consensus point will converge towards the global minimizer, provided that $ \bm{\theta}^{*} \in \text{supp}\left(\rho^0\right)$~\cite{carrillo2018}. The particle-based nature of CBO enables the formulation of a mean-field limit for $ N \to  \infty$. For analytical results on the mean-field limit, we refer to~\cite{carrillo2018,gerber2025,huang2022,kossMeanFieldLimit2024a, fornasier2022, gerber2026uniform}.

	\subsection{Variations of the CBO Method}
	We present two new variations of the classical Consensus-Based Optimization method, designed to improve performance of the classical CBO method. We first present the hybrid method, which improves the convergence speed of CBO and secondly Multi-Task CBO, which reduces memory overhead.

	\subsubsection{Hybrid Method}
	A natural approach to develop a new method is to combine two existing methods. We devise a hybrid method where we combine an Adam step and a CBO step together, similar to~\cite{wei2025}. Formally, the update is given by
	\begin{equation}
	\scalemath{0.95}{
    \bm{\theta}^{k+1}_{n} = \bm{\theta}^{k}_{n} - \gamma\Delta t~\frac{\hat{\bm{s}_{n}}^{k+1}}{\sqrt{\hat{\bm{r}}_{n}^{k+1}} + \delta} + \left( 1 - \gamma \right)\left( - \lambda \Delta t\left(\bm{\theta}_n^k - \bm{V}^k \right) + \tilde{\sigma} \sqrt{\Delta t}\left(\bm{\theta}_n^k - \bm{V}^k \right) \odot \bm{\xi}_n^k \right)},
	\end{equation}
	where $ \gamma $ is a parameter between 0 and 1 and controls how much influence CBO or Adam have on the update. The first and second moment estimates are computed as in \eqref{eq:adam_moments} and \eqref{eq:adam_moments_normal}, with the gradient for each particle given by
	\begin{equation}
		\bm{d}_{n}^{k} = \frac{1}{S}\sum_{s=1}^{S} \nabla_{\btheta} \mathcal{L}\left(\bm{y}_{s}, \hat{g}(\bm{x}_{s};\btheta_{n}^{k}) \right).
	\end{equation} 
	In the hybrid method, the minibatch strategy involves using the same minibatch to compute the gradient and the consensus point, as explained in Subsections \ref{gradient_based_opt} and \ref{cbo}.

	\subsubsection{Multi-Task CBO}

	Multi-task learning is another area of machine learning where the CBO method can be naturally applied to. Multi-task learning is a multi-objective optimization problem with the goal to improve the generalization of a model~\cite{caruana1997}. Instead of building a separate model for each task, one model shares parts of its internal representation. The knowledge gained from one task can improve the performance of other tasks. In practice, the multi-task risk is often of the form 
	\begin{equation}
		\hat{R} = \sum_{p} \kappa_{p}\hat{R}\left( \mathcal{T}_{p} \right),
	\end{equation}
	with $ \hat{R}\left( \mathcal{T}_{p} \right) $ the empirical risk associated to task or problem $ \mathcal{T}_{p} $ and a weight $ \kappa_{p} $. Currently, multi-task models are trained with a gradient method, but this method has several drawbacks. One key aspect is that the gradients have to be balanced by updating the weights $ \kappa_{p}$ to avoid one specific task from dominating the training of the model~\cite{chen2018gradnorm}. Another issue is gradients from different tasks conflicting with one another~\cite{yu2020gradient}. This can be alleviated by projecting one of the gradients of a task onto another gradient.

	The CBO method allows for a natural and memory-efficient implementation of the multi-task training~\cite{borghi2022}. Figure \ref{fig:diagram_multi_task} provides a conceptual illustration. Consider the training of a single neural network with CBO to approximate a given function. A requirement for convergence to the global minimizer is that $ \bm{\theta}^{*} \in \text{supp}\left( \rho^0 \right) $. Accordingly, we initialize particles by sampling from the distribution $ \rho^0 $. 
	
	Now consider a second, related approximation problem to the first. If the task is sufficiently similar, we expect the corresponding global minimizer to lie close to the first minimizer, in particular, within the support of $ \rho^0 $, as illustrated in Figure \ref{fig:diagram_multi_task}. In this case, there is no need to sample additional new particles for the second task; the particles used for the first task can be reused for the second task. More generally, the particles can be recycled across multiple tasks, provided that the global minimizers of these tasks remain within $ \text{supp}\left( \rho^0 \right) $. 
	
	A practical strategy for choosing the number of particles is to set $N$ equal to the number of tasks. In that case, an increasing number of tasks results in improved optimization accuracy, since each empirical risk function $ \hat{R}\left( \mathcal{T}_p \right)  $ is optimized with a larger population of particles. In the multi-task context, CBO does not need a particle ensemble for each task, reducing the memory overhead. However, a larger particle count than the number of tasks is also feasible in practice.

	\begin{figure}[h]
		\includegraphics[width = \linewidth]{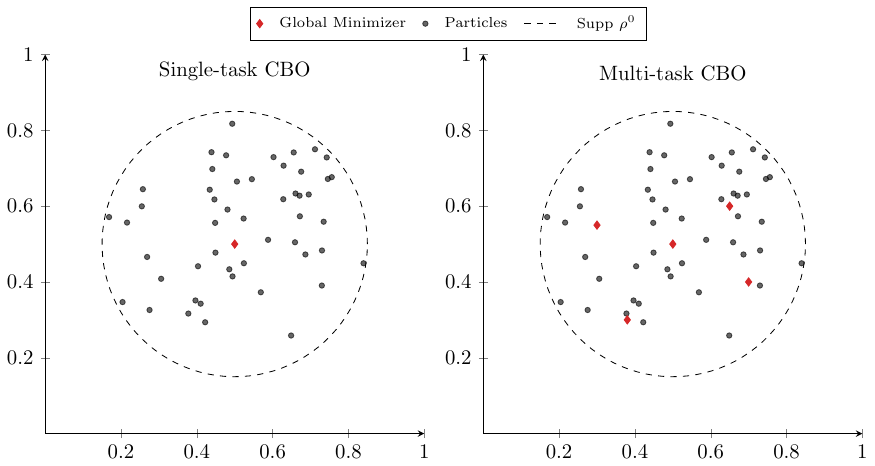}
		\captionsetup{justification=centering}
		\caption{Conceptual illustration of single-task versus Multi-Task CBO. In single-task CBO a single consensus point guides all particles toward the global minimizer of one empirical risk. In Multi-Task CBO the same particle ensemble is recycled across related tasks whose minimizers lie within the support of the common initialization $ \rho^0 $.}
		\label{fig:diagram_multi_task}
	\end{figure}

	We can reformulate the CBO method in the multi-task setting as 
	\begin{equation}\label{eq:multi_task_cbo}
	\begin{cases}
	\bm{\theta}_{n}^{k+1} = \bm{\theta}_{n}^{k} -\lambda \Delta t \left(\bm{\theta}_{n}^{k} - \bm{V}\left(\mathcal{T}_{p}^{k}\right) \right) + \tilde{\sigma} \sqrt{\Delta t}\left( \bm{\theta}_{n}^{k} - \bm{V}\left(\mathcal{T}_{p}^{k}\right) \right)\odot\bm{\xi}_{n}^{k}\\
	\mathcal{T}_{p}^{k+1} = \mathcal{T}_{p}^{k}\\
	\end{cases}, \quad n = p = 1, \dots, N,
	\end{equation}
	where the number of particles $ N $ equals the number of tasks. The consensus point is given by
	\begin{equation}
		\bm{V}\left(\mathcal{T}_{p}^{k}\right) = \frac{\frac{1}{N}\sum_{n=1}^{N}\bm{\theta}^{k}_{n}\exp\left( -\alpha \hat{R}\left(\bm{\theta}^{k}_{n}, \mathcal{T}_{p}^{k}\right)\right)}{ \frac{1}{N}\sum_{n=1}^{N}\exp\left( -\alpha \hat{R}\left(\bm{\theta}^{k}_{n}, \mathcal{T}_{p}^{k}\right)\right)}.
	\end{equation}


	\section{Numerical Examples}\label{section4}

	In this section, we present three numerical experiments designed to illustrate and evaluate the \mbox{optimization} methods discussed in Section \ref{section3}. The implementation is publicly available online~\cite{WilliamCode}.

	\subsection{Example 1: Sine Approximation}
	The goal of the first experiment is to investigate the applicability of CBO to train two-layer neural networks. To this end, we compare CBO with Adam, focusing on both the minimal empirical risk obtained and the stability of the training procedure. 
	
	The experiment setup is the following: consider a one-dimensional regression problem, where the goal is to approximate the function $x \mapsto \sin(2\pi x) $ on the domain $ [0,1] $ using a finite two-layer neural network, as in Eq.~\eqref{intro:disc_nn}. The neural network has a RELU activation function. We take the width of the neural network $ M = 100$. We sample 8000 data points $ x_{s} $ uniformly on the interval $[0,1]$ and apply the function $ y_{s} = \sin(2\pi x_{s}) + 0.01 \xi_s$, with $ \xi_s \sim \mathcal{N}(0,1) $, to generate the training dataset $\left\{\left( x_{s}, y_{s} \right)\right\}_{s=1}^{8000}$. We apply the minibatch strategy to both Adam and CBO using a minibatch size of $ S^\prime $. We choose the squared error loss, as in~\eqref{eq:MSE}, for the loss function. We initialize the particles of the CBO method from the uniform distribution $ \bm{\theta}^{0}_{n} \sim \mathcal{U}[-1,1] $. For Adam, we keep the default initialization of the neural network parameters provided by PyTorch~\cite{Pytorch}.
	
	The training of the two-layer neural network with CBO is carried out with the following parameters:
	\begin{align*}
		\quad N = 200, \quad \Delta t = 0.1, \quad \alpha = 10^5, \quad \lambda = 1, \quad \tilde{\sigma} = \sqrt{1.6}, \quad S^\prime = 800.
	\end{align*}
	Parameters shared between CBO and Adam, such as the minibatch size $ S^\prime $ and the learning rate (or equivalently time step) $ \Delta t $, are always taken equal. Every 100 epochs, the parameter $ \alpha $ is multiplied by 10 until it reaches $ 10^7 $. Lastly, we run the experiment ten times with different seeds and present the median of the results.

	Figure \ref{fig:sine_training} shows the empirical risk per epoch, comparing the optimization performance of CBO and Adam. In Figure \ref{fig:sine_example}, we present the approximation obtained with a two-layer neural network trained with CBO and Adam.
	
	In Figure \ref{fig:sine_training}, we observe that the CBO method converges to a lower empirical risk than Adam. The CBO method also exhibits better stability than Adam. Figure \ref{fig:sine_example} confirms that both CBO and Adam find a good approximation of the sine function. We do note, however, that the computational cost per iteration is higher for CBO, since a forward pass through the neural network is required for each particle.

	\begin{minipage}[t]{0.48\linewidth}
    \centering
	\includegraphics[width = \linewidth]{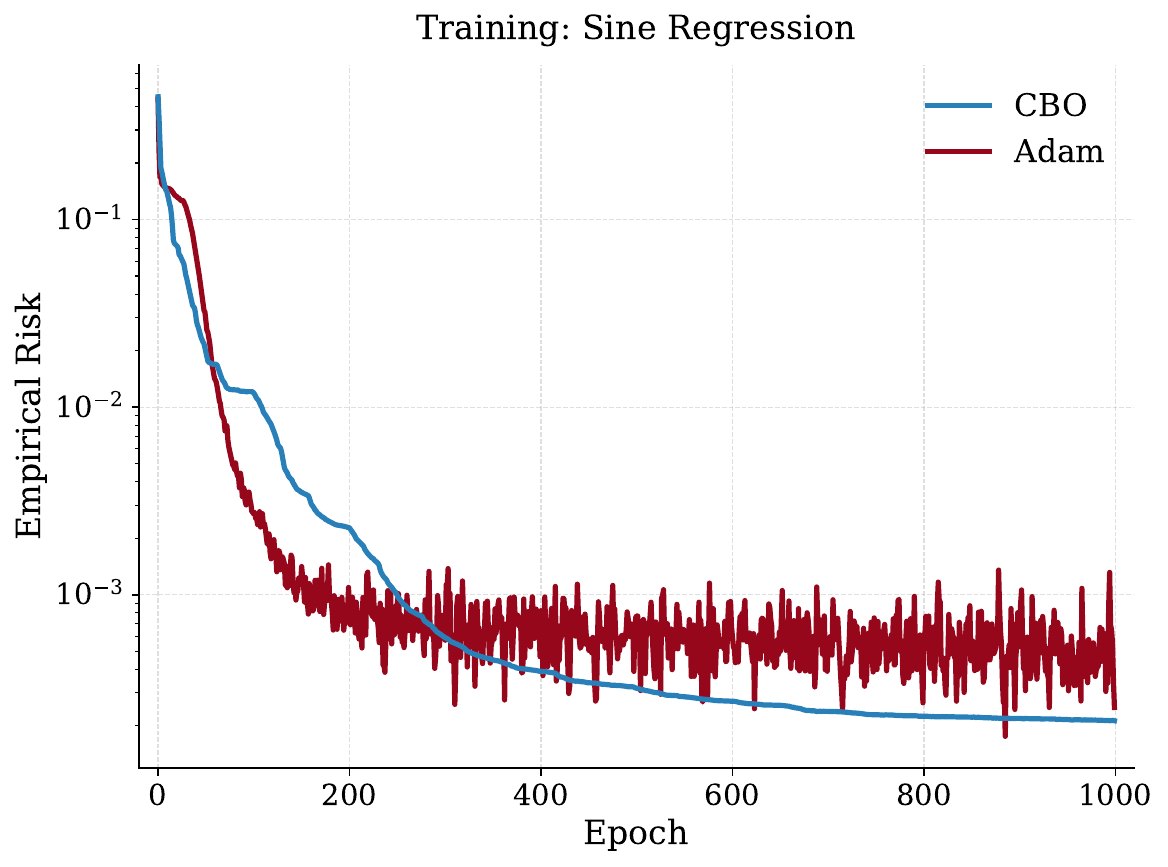}
	\captionsetup{justification=centering}
	\captionof{figure}{Empirical risk $ \hat{R}(\btheta) $ as a function of training epochs for a two-layer neural network trained with Adam and CBO. The figure displays the median empirical risk taken over 10 simulations.}
	\label{fig:sine_training}
	\end{minipage}
	\hfill
	\begin{minipage}[t]{0.48\linewidth}
    \centering
    \includegraphics[width = \linewidth]{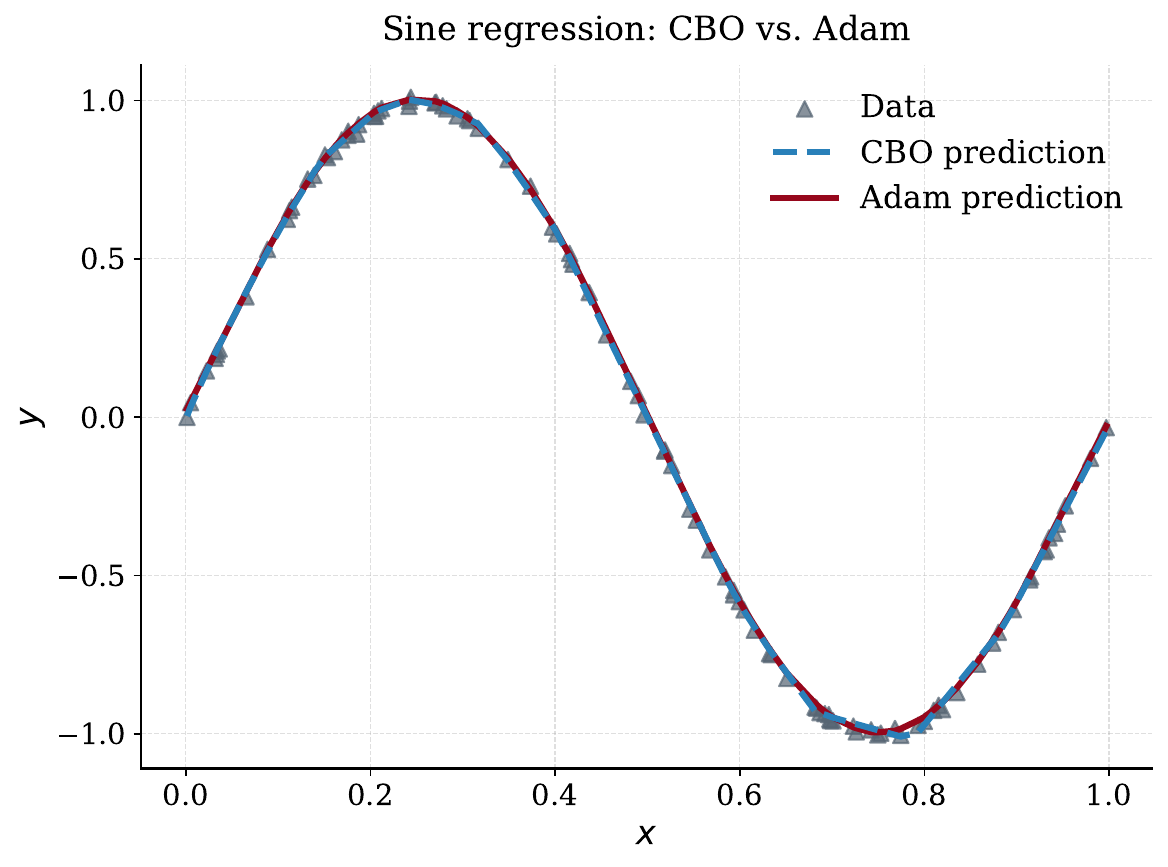}
	\captionsetup{justification=centering}
	\captionof{figure}{Approximations of the sine function $ \sin(2 \pi x) $ obtained with a two-layer neural network with $ M = 100 $ trained with CBO and Adam. The plot only contains a subset of the training dataset to improve the clarity.}
	\label{fig:sine_example}
	\end{minipage}
	
	\subsection{Example 2: MNIST}
	In the second experiment, we further investigate the applicability of CBO in training two-layer neural networks to classify the MNIST dataset~\cite{lecun1998mnistdatabase}. This is a standard classification problem in machine learning, and commonly used to benchmark new methods and models. We compare CBO, Adam and the hybrid method on both the minimal empirical risk achieved and the stability of the training. 
	
	The experiment setup is the following: consider the multi-class classification of the MNIST dataset with $ C = 10 $ classes. We again use a two-layer neural network with a width $ M = 20 $ and the RELU activation function. The MNIST dataset contains grayscale images of $ 28 \times 28 $ pixels. We flatten these images to vectors with 784 elements, which represent the input data points $ \bm{x}_s$. In the MNIST dataset, the true labels are given as class indices, i.e., $y_s \in \{1, \dots, C \} $. We take a subset of $ 10~000 $ images to form the training dataset $\left\{\left( \bm{x}_{s}, y_{s} \right)\right\}_{s=1}^{10000}$. We apply the minibatch strategy to Adam, CBO and the hybrid method using a minibatch size of $ S^\prime $. The neural network outputs a ten dimensional vector $ \hat{\bm{y}}_s $. We choose the cross-entropy loss function of PyTorch, similar to Eq.~\eqref{eq:cross-entropy}, which internally applies the softmax function to the predictions~\cite{Pytorch}. We initialize the particles of the CBO method from the uniform distribution $ \bm{\theta}^{0}_{n} \sim \mathcal{U}[-1,1] $. For Adam, we keep the default initialization of the neural network parameters provided by PyTorch.
	
	The training of the two-layer neural network with CBO is carried out with the following parameters:
	\begin{align*}
		N = 1000, \quad \Delta t = 10^{-1}, \quad \alpha = 10^5, \quad \lambda = 1, \quad \tilde{\sigma} = \sqrt{1.4}, \quad S^\prime = 1000.
	\end{align*} 
	For the Adam method, we set the minibatch size equal to that of CBO and perform a
	parameter sweep over the learning rates $\{10^{-1},\, 10^{-2},\, 10^{-3},\, 10^{-4}\}$
	to determine the optimal value. Adam is stable from $\Delta t = 10^{-3}$ onwards;
	we therefore adopt this value.

	The parameters of the hybrid method are set as:
	\begin{align*}
		N = 1000, \quad \Delta t = 10^{-2}, \quad \alpha = 10^5, \quad \lambda = 1,\quad \tilde{\sigma} = \sqrt{1.4}, \quad S^\prime = 1000, \quad \gamma = 0.7.
	\end{align*}

	Figure \ref{fig:comparison_cbo_adam} illustrates the empirical risk per epoch for CBO, Adam and the hybrid method. We observe that Adam obtains a significantly lower empirical risk than CBO and converges faster. The hybrid method converges faster than either method individually. The larger learning rate ($\Delta t = 10^{-2}$) used in the hybrid method, at which Adam is unstable, is admissible because the CBO component improves the stability.

	\begin{figure}[ht]
    \centering
    \includegraphics[width = 0.8\textwidth]{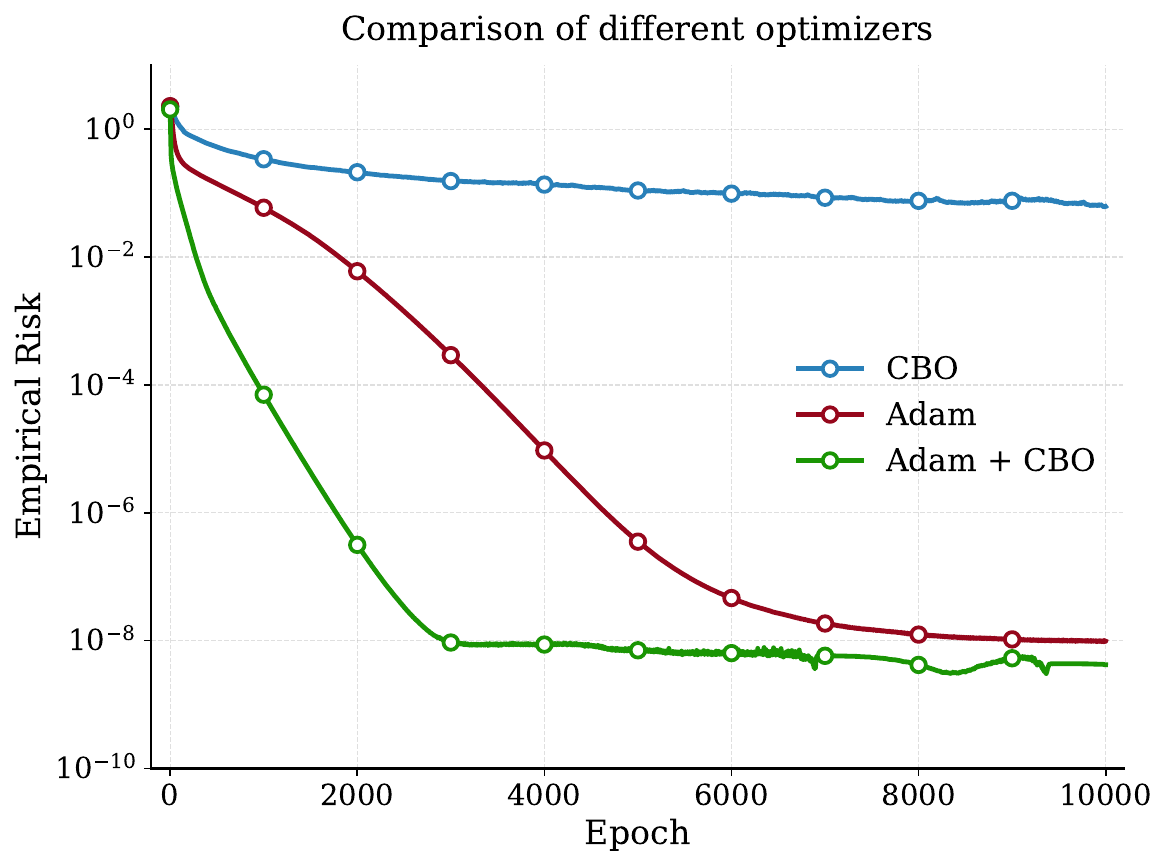}
	\captionsetup{justification=centering}
	\captionof{figure}{Empirical risk $ \hat{R}(\btheta) $ as a function of training epochs for a two-layer neural network trained with Adam ($\Delta t = 10^{-3}$), CBO ($\Delta t = 10^{-1}$) and the hybrid method (Adam + CBO) ($\Delta t = 10^{-2}$) on the MNIST dataset.}
	\label{fig:comparison_cbo_adam}
    \end{figure}

	The improved stability of the hybrid method is a consequence of the fact that the minimizer is represented by the consensus point $ \bm{V}^k $. For the consensus point to suddenly jump due to an instability, all particles would need to move simultaneously in approximately the same direction. However, before convergence, the particles are distributed across the optimization landscape, so they do not experience the same instability simultaneously; interaction through the consensus point stabilizes the dynamics. 

	\subsection{Example 3: Multi-Task CBO}
	In the third example, we study whether the particle recycling strategy of Multi-Task CBO is able to \mbox{minimize} multiple empirical risk functions. Specifically, we assess whether the empirical risk consistently decreases throughout training for all considered tasks.
	
	We set up the experiment as follows: consider the approximation problem of 100 shifted sine functions, where the goal is to approximate each function using a different two-layer neural network. We take 100 functions of the form $(x,\Delta y_{p}) \mapsto \sin(2\pi x) + \Delta y_{p} $ on the domain $[0,1]$, where the shifts are uniformly spaced in $[-1,1]$, $ \Delta y_{p} = -1 + 2(p-1)/99,~p=1, \dots, 100 $. We assume that the shifted sine functions admit similar neural network representations and hence we expect their corresponding global minimizer to be in close proximity to each other.
	
	The neural networks have a RELU activation function and a width of $ M = 100 $. We sample 8000 data points $ x_{s} $ uniformly on the interval $[0,1]$ and apply the function $ y_{s, p} = \sin(2\pi x_{s}) + \Delta y_{p}$ to generate 100 training datasets $\left\{\left( x_{s}, y_{s, p} \right)\right\}_{s=1}^{8000},~p=1, \dots, 100$. We apply the minibatch strategy, dividing each dataset into minibatches of size $ S^\prime $. The loss function for each task $p$ is the squared error loss, resulting in 100 different MSE risks $ \hat{R}_{p} $. We initialize the particles of the CBO method from the uniform distribution $ \bm{\theta}^{0}_{n} \sim \mathcal{U}[-1,1] $. 
	
	The parameters of the experiment are:
	\begin{align*}
		N = 200, \quad \Delta t = 0.2, \quad \alpha = 10^4, \quad \lambda = 1, \quad \tilde{\sigma} = \sqrt{1.8}, \quad S^\prime = 800.
	\end{align*}
	Every 100 epochs, the parameter $ \alpha $ is multiplied by 10 until it reaches $ 10^7 $. Lastly, we run the experiment ten times to average out as much noise.

	Given 100 different problems, the Multi-Task CBO method has 100 different consensus points. When optimizing with 200 particles, it is necessary to determine which particles move towards which consensus point. The particle update strategy is as follows: the first two particle will move towards the first consensus point, the third and fourth particles will move to the second consensus point and so forth. Multi-Task CBO effectively assigns two particles to each problem.
	  
	Figure \ref{fig:multi_task_training} presents the median and minimum empirical risk per epoch. The median and minimum are taken over the 100 tasks during training. In Figure \ref{fig:multi_task_examples}, we display the approximation results for five problems. In Figure \ref{fig:multi_task_training}, we observe that both the minimum and median decrease during training, indicating that the CBO method effectively minimizes the risk for all tasks. In Figure \ref{fig:multi_task_examples}, we observe an accurate approximation for these five problems, confirming that CBO successfully trained multiple tasks using the same particle set.

	\begin{minipage}[t]{0.48\linewidth}
    \centering
    \includegraphics[width = \linewidth]{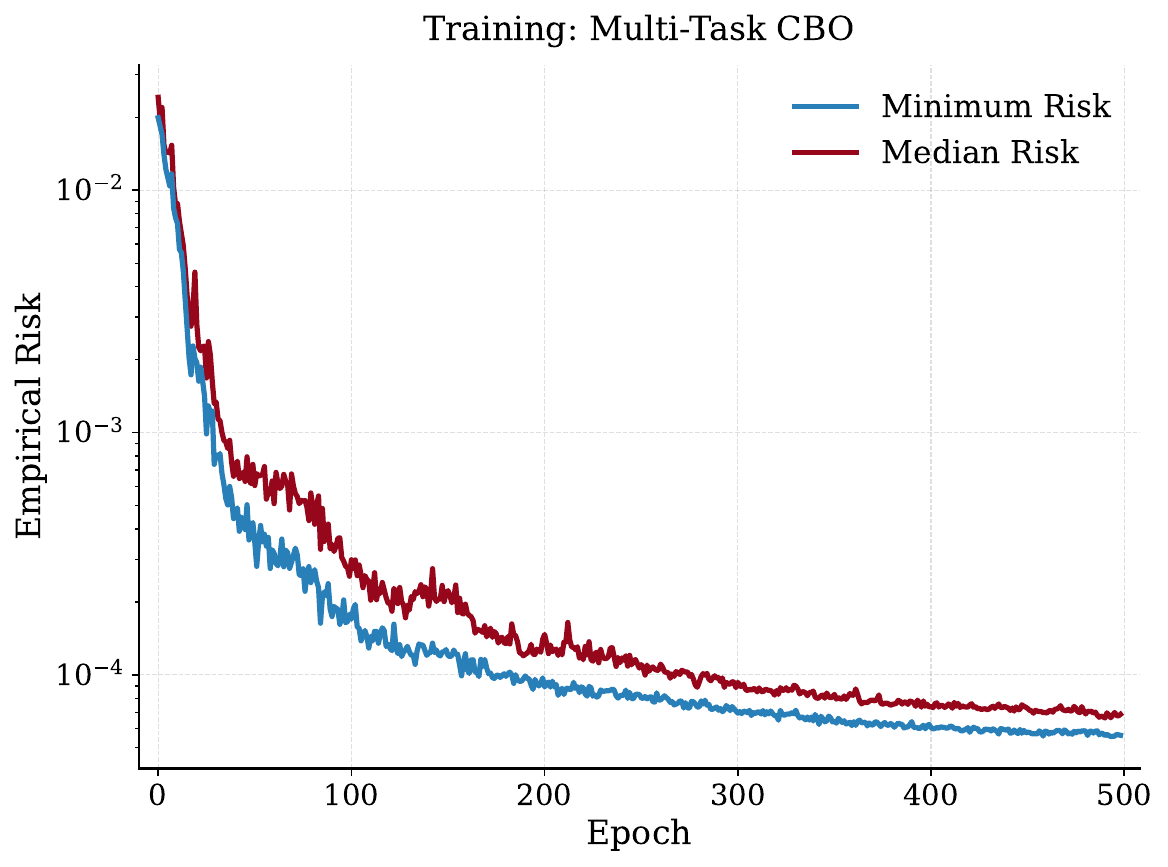}
	\captionsetup{justification=centering}
	\captionof{figure}{The median and minimum empirical risk $ \hat{R}(\btheta) $ as a function of training epochs for two-layer neural networks trained with Multi-Task CBO. The median and minimum are taken over 100 different risk functions.}
	\label{fig:multi_task_training}
	\end{minipage}
	\hfill
	\begin{minipage}[t]{0.48\linewidth}
    \centering
    \includegraphics[width = \linewidth]{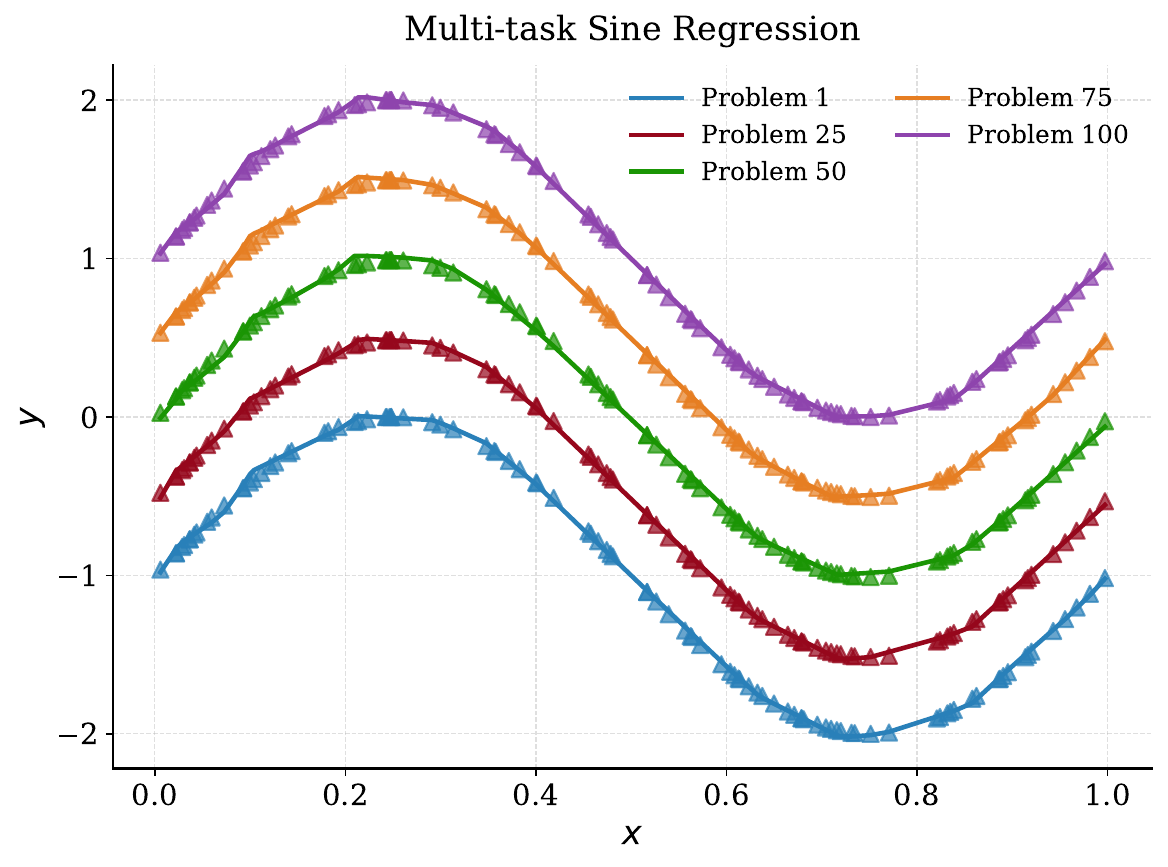}
	\captionsetup{justification=centering}
	\captionof{figure}{Approximation of five different sine functions obtained with five different two-layer neural networks, each with $ M = 100 $. The neural networks are trained with Multi-Task CBO.}
	\label{fig:multi_task_examples}
	\end{minipage}

    \section{Mean-field Models}\label{section5}
	This section provides a study of mean-field models, as indicated in Figure \ref{fig:meanfield_diagram}. We derive an explicit JKO scheme that arises when both the width of the neural network $ M $ and the number of particles $ N $ tend to infinity. In Subsection \ref{limit_M}, we start from the classical CBO formulation, detailed in \mbox{Subsection \ref{cbo}}, and take the width of the neural network to infinity ($ M \to \infty $). We obtain an optimal transport (OT) formulation of CBO. Starting from the optimal transport formulation of CBO, Subsection \ref{limit_N} derives the time-discrete mean-field model ($ N \to \infty $). Further, we show that the population variance decreases each iteration. Finally, Subsection~\ref{sec:mean-field_numerics} illustrates numerically that the empirical risk
	decreases monotonically in $M$ and in $N$ separately, in agreement with
	the respective mean-field models.
	\begin{figure}[h]
		\centering
		\includegraphics[width=0.9\linewidth]{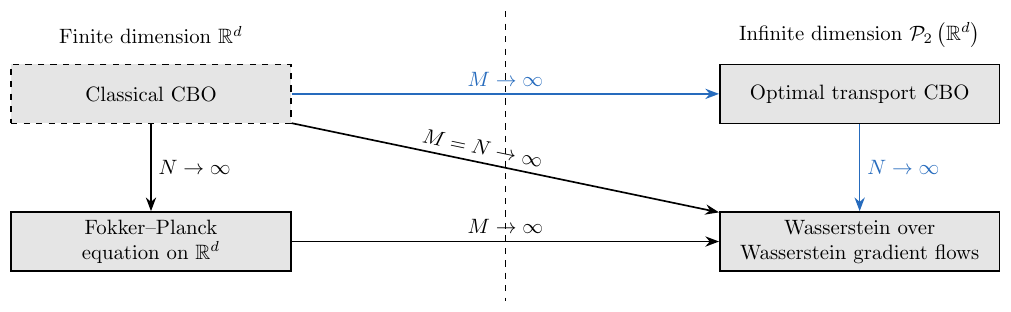}
		\captionsetup{justification=centering}
		\captionof{figure}{Diagram illustrating possible derivations of the mean-field limits. In this paper, we consider the blue path, starting from the classical CBO formulation (dashed box) of Section \ref{section3} and proceeding with $ M $ and then $ N $ to infinity.   }
	\label{fig:meanfield_diagram}
	\end{figure}

	\subsection{The infinite width model ($ M \to \infty $)}\label{limit_M}
	The classical CBO formulation described in Section \ref{section3} formally breaks down when the number of hidden neurons $ M $ tends to infinity, due to the fact that the optimization dimension equals $ M(d+2) $. Instead of representing each neural network as a point in $ \R^{M(d+2)} $, we therefore choose to represent the neural networks by corresponding measures $ \mu_{n} $, as introduced in Section~\ref{section2}. We reformulate the dynamics in the Wasserstein space, similar to the work~\cite{borghi2024}. For an ensemble of $ N $ particles $ \mu^{k}_{n} \in \Pw $ and a time step $ \Delta t \in (0,1] $  we have  
	\begin{equation}\label{sec4:ot_dynamics}
		\mu_{n}^{k+1} = \left( \left( 1 - \Delta t \right)\text{Id} + \Delta t T_{n} \right)_{\#}\mu_{n}^{k},\quad n=1,\dots,N,
	\end{equation}
	with $ T_{n} : \R^{d+2} \to \R^{d+2}$ the optimal transport map defined as 
	\begin{equation}
		\overline{\mu}^{k} = \left( T_{n} \right)_{\#}\mu_{n}^{k}, \quad n=1,\dots,N,
	\end{equation} 
	and $ \overline{\mu}^{k} $ represents the consensus point, i.e., the barycenter in $ \Pw $. It is given by
	\begin{equation}\label{eq:barycenter}
		\overline{\mu}^{k} = \arg \min_{\nu}~ \frac{1}{2}\sum_{n=1}^{N} \beta\left( \mu_{n}^{k}\right)\mathrm{W}_{2}^{2}\left( \nu, \mu_{n}^{k} \right), \quad \beta(\mu_{n}^{k}) = \frac{\exp(-\alpha \hat{R}(\mu_{n}^{k}))}{\sum_{n=1}^{N}\exp(-\alpha \hat{R}(\mu_{n}^{k}))}.    
	\end{equation} 
	We note that the dynamics in \eqref{sec4:ot_dynamics} are fully deterministic. To the best of our knowledge, no formulation of Brownian motion on $ \Pw$ is currently available. As a result, the optimal transport formulation of CBO does not include a diffusion term. However, the noise can be added after discretization with empirical measures $ \hat{\mu}^{k}_{n} $, yielding dynamics that closely resemble the classical CBO formulation. 
	
	We consider the optimal transport formulation of CBO in the single-task setting, although it can also be written in the multi-task setting as
	\begin{equation}
		\begin{cases}
			\mu_{n}^{k+1} = \left( \left( 1 - \Delta t \right)\text{Id} + \Delta t T_{n} \right)_{\#}\mu_{n}^{k}\\
			\mathcal{T}_{p}^{k+1} = \mathcal{T}_{p}^{k}
		\end{cases},\quad n=p=1,\dots,N,
	\end{equation}
	where now the barycenter depends on the particular task $\mathcal{T}_{p}$
	\begin{equation}
		\overline{\mu}\left(\mathcal{T}_{p}^{k}\right) = \arg \min_{\nu}~\frac{1}{2}\sum_{n=1}^{N} \beta\left( \mu_{n}^{k}, \mathcal{T}_{p}^{k}\right)\mathrm{W}_{2}^{2}\left( \nu, \mu_{n}^{k} \right), \quad \beta\left(\mu_{n}^{k}, \mathcal{T}_{p}^{k}\right) = \frac{\exp(-\alpha \hat{R}\left(\mu_{n}^{k}, \mathcal{T}_{p}^{k}\right))}{\sum_{n=1}^{N}\exp(-\alpha \hat{R}\left(\mu_{n}^{k}, \mathcal{T}_{p}^{k}\right))}.
	\end{equation}

	At any given time, it holds that the consensus point and barycenter are applications of a weighted Fréchet mean in the corresponding metric space:
	\begin{proposition}\label{Proposition1}
		The representation of the consensus point in $\R^{d}~\left(Eq.~\eqref{cbo:consensus_point}\right)$ and the barycenter in $ \mathcal{P}_{2}\left( \R^{d} \right)~\left(Eq.~\eqref{eq:barycenter}\right)$ are equal in the following sense:
		\begin{equation}\label{eq:frechet_mean}
			\overline{x} = \arg \min_{v} \frac{1}{2}\sum_{n=1}^{N}\beta \left( x_n \right) d^{2}\left( v, x_{n} \right),
		\end{equation}
		where $ d \left( \cdot, \cdot \right)  $ denotes the distance function in $ \R^{d} $ and $ \mathcal{P}_{2}(\R^{d}) $, respectively.
	\end{proposition}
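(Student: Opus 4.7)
The statement really asks me to recognize two objects as instances of the same variational template, so the plan is to verify each one in turn and then observe that they share the weighted Fréchet mean form of Eq.~\eqref{eq:frechet_mean}.

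First I would treat the Wasserstein side, which is essentially immediate. The barycenter in $\mathcal{P}_{2}(\R^{d+2})$ is introduced in Eq.~\eqref{eq:barycenter} as $\overline{\mu}^{k} = \arg\min_{\nu} \tfrac{1}{2}\sum_{n} \beta(\mu_{n}^{k}) \mathrm{W}_{2}^{2}(\nu, \mu_{n}^{k})$, which is literally the weighted Fréchet mean with distance $d(\cdot,\cdot) = \mathrm{W}_{2}(\cdot,\cdot)$. So on the measure side there is nothing to prove; one only has to point at the defining equation.

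Next I would handle the Euclidean side, which is where an actual computation is needed. Starting from the weighted sum of squared distances
\begin{equation*}
F(v) = \tfrac{1}{2}\sum_{n=1}^{N} \beta(x_{n}) \|v - x_{n}\|^{2},
\end{equation*}
I would expand, use $\sum_{n} \beta(x_{n}) = 1$ (which follows from the normalization in Eq.~\eqref{cbo:consensus_point}), and complete the square to obtain $F(v) = \tfrac{1}{2}\|v - \overline{x}\|^{2} + \text{const}$, where $\overline{x} = \sum_{n} \beta(x_{n}) x_{n}$. Strict convexity of this quadratic yields a unique minimizer at $v = \overline{x}$, which is precisely the consensus point in Eq.~\eqref{cbo:consensus_point}. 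Equivalently one can set $\nabla F(v) = \sum_{n} \beta(x_{n})(v - x_{n}) = 0$ and solve.

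There is no real obstacle here: the argument is the standard bias/variance identity applied with the softmax weights $\beta$, and the only thing worth emphasizing is that the same $\beta$ appears in both formulations, so the weighted Fréchet mean interpretation transfers verbatim from $(\R^{d}, \|\cdot\|)$ to $(\mathcal{P}_{2}(\R^{d+2}), \mathrm{W}_{2})$. I would close the proof by noting that this common viewpoint is what justifies replacing $\bm{V}^{k}$ by $\overline{\mu}^{k}$ in the optimal transport reformulation~\eqref{sec4:ot_dynamics} and sets the stage for the mean-field limit in Subsection~\ref{limit_N}.
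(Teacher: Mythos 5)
Your proposal is correct and follows essentially the same route as the paper: the Wasserstein side is definitional, and the Euclidean side reduces to minimizing $F(\bm{v})=\tfrac{1}{2}\sum_{n}\beta(\bm{\theta}_{n})\|\bm{v}-\bm{\theta}_{n}\|_{2}^{2}$, whose minimizer is the weighted average $\sum_{n}\beta(\bm{\theta}_{n})\bm{\theta}_{n}$. Your completing-the-square argument is a cosmetic variant of the paper's first-order optimality condition (which you also mention), and is marginally more careful in that it makes explicit both the normalization $\sum_{n}\beta(\bm{\theta}_{n})=1$ and the strict convexity guaranteeing that the critical point is the unique minimizer.
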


	\begin{proof}
	The consensus point in $ \R^{d} $ is the result of the minimization of 
		\begin{equation}
			F \left( \bm{v} \right) = \frac{1}{2}\sum_{n=1}^{N}\beta \left( \bm{\theta}_{n} \right) \|\bm{v} - \bm{\theta}_{n} \|_{2}^{2}. 
		\end{equation}
		Given $ F \in C^{1}\left( \R^{d} \right)  $, an optimal point must satisfy the first-order necessary condition
		\begin{equation}\label{eq:fo_necessary}
			0 = \nabla F\left( \bm{v} \right) = - \sum_{n=1}^{N}\beta \left( \bm{\theta}_{n} \right)\left( \bm{\theta}_{n} - \bm{v} \right).
		\end{equation}
		Solving \eqref{eq:fo_necessary} yields the minimizer
		\begin{equation}\label{eq:consensus_linear_form}
			\bm{v}= \sum_{n=1}^{N} \beta \left( \bm{\theta}_{n} \right) \bm{\theta}_{n},
		\end{equation}
		which is the classical consensus point, see Equation \eqref{cbo:consensus_point}. For the barycenter in $ \mathcal{P}_{2}(\R^{d}) $, Equation~\eqref{eq:barycenter} fulfills Proposition~\ref{Proposition1}.
	\end{proof}
  
	\begin{proposition}
		Assume empirical measures of the form
		\begin{equation}
			\hat{\mu}_{n} =  \frac{1}{M}\sum_{i=1}^{M}\delta \left( x - \bm{x}_{n,i} \right) \in \mathcal{P}_{2}(\R^{d}),\quad n=1,\dots,N.
		\end{equation} 
		The barycenter in Eq.~\eqref{eq:barycenter} is given by
		\begin{equation}
			\bar{\mu} =  \frac{1}{M}\sum_{j=1}^{M}\delta \left( y - \bm{y}_{j} \right) \in \mathcal{P}_{2}(\R^{d}),
		\end{equation}
		with support points
		\begin{equation}
		\bm{y}_{j} = \sum_{n=1}^{N}\beta(\hmu)\sum_{i=1}^{M}\left(\pi^{*}_{n}\right)_{j,i}\bm{x}_{n,i},\quad j=1, \dots, M, 
		\end{equation}
		where $ \pi^{*}_{n} \in \R^{M \times M}$ is a permutation matrix.  
	\end{proposition}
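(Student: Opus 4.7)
The plan is to exploit the special structure of $W_2^2$ between equally weighted $M$-atom empirical measures, reducing the barycenter problem \eqref{eq:barycenter} to a joint minimization over atoms $\bm{y}_j$ and assignment permutations $\pi_n$. First, I would invoke the classical assignment characterization: for any candidate $\nu = \frac{1}{M}\sum_{j=1}^M \delta(y - \bm{y}_j)$ and each $\hmu$, the admissible couplings are, after scaling by $M$, doubly stochastic matrices; since the transport cost is linear in the coupling and the extreme points of the Birkhoff polytope are permutation matrices,
\begin{equation}
\mathrm{W}_2^2(\nu, \hmu) \;=\; \min_{\pi_n \in \mathcal{S}_M}\, \frac{1}{M}\sum_{j=1}^M \bigl\|\bm{y}_j - \bm{x}_{n,\pi_n(j)}\bigr\|_2^2.
\end{equation}

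Second, I would restrict the minimization in \eqref{eq:barycenter} to the class of $M$-atom uniform empirical measures. This is the step that needs the most care and is the main obstacle: one must argue that the Wasserstein barycenter of $N$ uniform $M$-atom measures may itself be taken uniform with exactly $M$ atoms. I would justify it via the Agueh--Carlier multi-marginal reformulation, whose optimal plan is supported on a finite set of tuples $(\bm{x}_{1,i_1},\dots,\bm{x}_{N,i_N})$; combined with the equal-weight constraint, this yields an optimal barycenter of the stated form. Once this reduction is in place, the objective becomes
\begin{equation}
\frac{1}{2M}\sum_{n=1}^N \beta(\hmu)\,\sum_{j=1}^M \bigl\|\bm{y}_j - \bm{x}_{n,\pi_n(j)}\bigr\|_2^2,
\end{equation}
to be minimized jointly over $\{\bm{y}_j\}_{j=1}^M$ and $\{\pi_n\}_{n=1}^N$.

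Third, I would freeze an optimal tuple of permutations $(\pi_1^*,\dots,\pi_N^*)$ and minimize in the atoms. The reduced problem is separable in $j$ and quadratic, so by the weighted Fr\'echet-mean argument already used in Proposition~\ref{Proposition1} (see Eqs.~\eqref{eq:fo_necessary}--\eqref{eq:consensus_linear_form}), the first-order condition gives the closed form
\begin{equation}
\bm{y}_j \;=\; \sum_{n=1}^N \beta(\hmu)\, \bm{x}_{n,\pi_n^*(j)} \;=\; \sum_{n=1}^N \beta(\hmu)\sum_{i=1}^M (\pi_n^*)_{j,i}\,\bm{x}_{n,i},
\end{equation}
which is exactly the claimed formula after identifying $\pi_n^*$ with the corresponding permutation matrix $(\pi_n^*)_{j,i} = \mathbf{1}\{\pi_n^*(j) = i\}$.

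Finally, I would close the loop by verifying consistency: given the atoms above, each $\pi_n^*$ must indeed be an optimizer of the linear assignment problem between $\bar{\mu}$ and $\hmu$, so the pair $(\{\bm{y}_j\},\{\pi_n^*\})$ is a stationary point of the alternating scheme and, by convexity of the inner quadratic minimizations, realizes the barycenter. As noted, the only nontrivial step is the $M$-atom reduction; all remaining computations are either textbook optimal transport (Birkhoff) or a direct reuse of Proposition~\ref{Proposition1}.
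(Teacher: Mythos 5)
Your proposal is correct and follows essentially the same route as the paper's proof: reduce $W_{2}^{2}$ between uniform $M$-atom empirical measures to a linear assignment problem over permutation matrices, freeze an optimal tuple of permutations $\pi_{n}^{*}$, and solve the resulting separable quadratic first-order condition to obtain $\bm{y}_{j} = \sum_{n}\beta(\hmu)\sum_{i}(\pi_{n}^{*})_{j,i}\bm{x}_{n,i}$. The only place you go beyond the paper is the reduction to uniform $M$-atom candidate barycenters, which the paper simply posits as an ansatz, whereas you correctly identify it as the delicate step and sketch a justification via the multi-marginal (Agueh--Carlier) reformulation.
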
 

	\begin{proof}
	For any $ M $, let $ \hmu \in \mathcal{P}_{2}\left( \R^{d} \right) $ be
	\begin{equation}\label{eq:empirical_measures}
		\hmu^{k} = \frac{1}{M}\sum_{i=1}^{M}\delta \left( x - \bm{x}_{n,i} \right).
	\end{equation}
	Then, the barycenter is of the form
	\begin{equation}\label{eq:empirical_barycenter}
		\hat{\nu} = \frac{1}{M}\sum_{j=1}^{M}\delta \left( y - \bm{y}_{j} \right),
	\end{equation}
	and according to Proposition \ref{Proposition1}, obtained as the minimization of
	\begin{equation}\label{eq:min_func_measures}
		F \left( \hat{\nu} \right) =  \frac{1}{2}\sum_{n=1}^{N}\beta \left( \hmu \right)W_{2}^{2} \left( \hat{\nu}, \hmu \right).
	\end{equation} 
	Substituting Equations~\eqref{eq:empirical_measures} and \eqref{eq:empirical_barycenter} into \eqref{eq:min_func_measures} yields
	\begin{equation}
		F \left(\bm{y}_{1}, \dots, \bm{y}_{M} \right) = \frac{1}{2}\sum_{n=1}^{N}\beta \left( \hmu \right) \min_{\pi_{n} \in \Pi} \sum_{j=1}^{M} \sum_{i=1}^{M} \frac{1}{M}\left(\pi_{n}\right)_{j,i}\|\bm{y}_{j} - \bm{x}_{n,i} \|_{2}^{2},  
	\end{equation}
	where $ \Pi$ denotes the set of permutation matrices of size $ M \times M $. This set is compact, hence, there exists a minimizer $ \pi^* \in \R^{M \times M}$.  To compute the barycenter, we solve for the optimal coupling $ \pi_{n} $  and the location of the barycenter points $\left(\bm{y}_{1}, \dots, \bm{y}_{M} \right)$~\cite{cuturi2014}. Now, for a fixed optimal coupling $ \pi^{*}_{n} $, we obtain
	\begin{equation}
	F \left(\bm{y}_{1}, \dots, \bm{y}_{M} \right) = \frac{1}{2}\sum_{n=1}^{N}\beta \left( \hmu \right) \sum_{j=1}^{M} \sum_{i=1}^{M} \frac{1}{M}\left(\pi^{*}_{n}\right)_{j,i}\|\bm{y}_{j} - \bm{x}_{n,i} \|_{2}^{2}.
	\end{equation}
	The first-order necessary optimality condition reads
	\begin{equation}
		0 = \frac{\partial F \left(\bm{y}_{1}, \dots, \bm{y}_{M} \right)}{\partial \bm{y}_{j}} = \sum_{n=1}^{N}\beta(\hmu)\sum_{i=1}^{M} \frac{1}{M}\left(\pi^{*}_{n}\right)_{j,i}\left( \bm{y}_{j} - \bm{x}_{n,i} \right),
	\end{equation}
	and hence
	\begin{align}
		\bm{y}_{j} & = \frac{\sum_{n=1}^{N}\beta(\hmu)\sum_{i=1}^{M}\left(\pi^{*}_{n}\right)_{j,i}\bm{x}_{n,i} }{\sum_{n=1}^{N}\beta(\hmu)\sum_{i=1}^{M}\left(\pi^{*}_{n}\right)_{j,i}} = \sum_{n=1}^{N}\beta(\hmu)\sum_{i=1}^{M}\left(\pi^{*}_{n}\right)_{j,i}\bm{x}_{n,i}.
	\end{align}
	\end{proof}

	\subsection{The infinite particle model ($ N \to \infty $)}\label{limit_N}
	In Subsection \ref{limit_M}, we introduced the optimal transport dynamics
	\begin{equation}\label{eq:mean_field_particle_update}
		\mu_{n}^{k+1} = \left( \left( 1 - \Delta t \right)\text{Id} + \Delta t T_{n} \right)_{\#}\mu_{n}^{k},\quad n=1,\dots,N.
	\end{equation}
	with the optimal transport plan $ T_{n} : \R^{d+2} \to \R^{d+2}$ given by
	\begin{equation}
	\bar{\mu}^k = \left( T_{n} \right)_{\#} \mu_{n}^{k}, \quad n=1,\dots,N.
	\end{equation}
	We now consider the mean-field limit as $ N \to \infty$. Let $ X \coloneq \Pw $ and equip $ X $ with the 2-Wasserstein metric $ W_2 $. Define 
	\begin{equation}
		\PPw \coloneq \left\{ \rho \in \mathcal{P}(X): \int_{X} W_2^{2}\left( \mu, \delta_{0} \right) d\rho(\mu) < \infty\right\}.
	\end{equation} 
	For $ \rho, \phi \in \PPw $ define the Wasserstein-over-Wasserstein distance as~\cite{bonet2025flowing,beiglbock2025,pinzi2025}
	\begin{equation}
    \mathbb{W}_{2}^{2}(\rho, \phi) \coloneq \inf_{\Gamma \in \Pi(\rho, \phi)} \int_{X \times X} W_{2}^{2}(\mu, \nu)d \Gamma(\mu,\nu),
	\end{equation}
	where 
	\begin{equation}
		\Pi(\rho, \phi) \coloneq \left \{ \Gamma \in \mathcal{P}\left( X \times X \right) : (\pi_{1})_{\#}\Gamma = \rho,~(\pi_{2})_{\#}\Gamma = \phi\right \}
	\end{equation}
	is the set of couplings between $ \rho $ and $ \phi $. We denote the law of particles by $ \rho \in \PPw $ and make the following assumptions:\\
	\textbf{Assumption 1.} The barycenter $ \bar{\mu} \in X $ of $ \rho$ exists and is the global minimizer of $ F_{\rho}(\nu) : X \to \R $, given by
	\begin{equation}
		F_{\rho}(\nu) = \frac{1}{2} \int_{X} W_{2}^{2}\left( \mu, \nu \right)d \rho(\mu).  
	\end{equation}\\  
	\textbf{Assumption 2.} Each measure $\mu, \bar{\mu} \in X$ is absolutely continuous with respect to the Lebesgue measure.

	For each $ \mu \in X $ we define the measurable map $\Psi_{\Delta t}:  X \to X$ 
	\begin{equation}
		\Psi_{\Delta t}(\mu) \coloneq \left( \left( 1 - \Delta t \right)\text{Id} + \Delta t T\right)_{\#} \mu,
	\end{equation}
	with 
	\begin{equation}
		\bar{\mu} = T_{\#}\mu, 
	\end{equation}
	such that the particle update in Eq.~\eqref{eq:mean_field_particle_update} reads $ 
	\mu_{n}^{k+1} = \Psi_{\Delta t}(\mu_{n}^{k})$. The law of particles $ \rho$ evolves as
	\begin{equation}
		\rho^{k+1} = \left( \Psi_{\Delta t} \right)_{\#} \rho^k. 
	\end{equation} 
	This is the time-discrete mean-field limit, which provides an analytic framework to investigate the optimal transport CBO scheme:

	\begin{proposition}\label{prop:consensus}
		Let the variance of the measure $ \rho^k $ be given by
		\begin{equation}
			\mathcal{V}(\rho^k) = \frac{1}{2}\int_{X} W_{2}^{2}(\mu, \bar{\mu}^{k})d\rho^k(\mu).
		\end{equation}
		For a time step $ \Delta t \in (0,1] $, it holds that
		\begin{equation}
			\mathcal{V}(\rho^{k+1}) \leq \left( 1 - \Delta t \right)^{2} \mathcal{V}(\rho^{k}) .
		\end{equation}
	\end{proposition}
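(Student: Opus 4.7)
The plan is to combine two ideas: a pointwise contraction estimate saying that the map $\Psi_{\Delta t}$ shrinks the Wasserstein distance to the current barycenter by the factor $(1-\Delta t)$, together with the variational characterization of the new barycenter as the minimizer of $F_{\rho^{k+1}}$ (Assumption 1).

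First, I would establish the pointwise contraction: for every $\mu$ in the support of $\rho^k$,
\begin{equation}
W_2\bigl(\Psi_{\Delta t}(\mu), \bar{\mu}^k\bigr) \leq (1-\Delta t)\, W_2(\mu, \bar{\mu}^k).
\end{equation}
Let $T$ be the optimal transport map from $\mu$ to $\bar{\mu}^k$; by Assumption 2, Brenier's theorem guarantees its existence and uniqueness. Set $S(x) = (1-\Delta t) x + \Delta t\, T(x)$, so that $\Psi_{\Delta t}(\mu) = S_{\#}\mu$. Then $(S, T)_{\#}\mu$ is an admissible coupling between $\Psi_{\Delta t}(\mu)$ and $\bar{\mu}^k$, so
\begin{equation}
W_2^2\bigl(\Psi_{\Delta t}(\mu), \bar{\mu}^k\bigr) \leq \int_{\R^{d+2}} \bigl| S(x) - T(x) \bigr|^2 d\mu(x) = (1-\Delta t)^2 \int_{\R^{d+2}} |x - T(x)|^2 d\mu(x) = (1-\Delta t)^2\, W_2^2(\mu, \bar{\mu}^k).
\end{equation}
This is the geometric heart of the argument and is the step that relies most directly on the interpretation of $\Psi_{\Delta t}$ as moving each particle along a Wasserstein geodesic toward $\bar{\mu}^k$.

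Second, by Assumption 1, $\bar{\mu}^{k+1}$ is the global minimizer of $F_{\rho^{k+1}}$, hence $\bar{\mu}^k$ is a feasible competitor and
\begin{equation}
\mathcal{V}(\rho^{k+1}) = F_{\rho^{k+1}}\bigl(\bar{\mu}^{k+1}\bigr) \leq F_{\rho^{k+1}}\bigl(\bar{\mu}^k\bigr) = \frac{1}{2}\int_{X} W_2^2(\nu, \bar{\mu}^k)\, d\rho^{k+1}(\nu).
\end{equation}
Third, using the push-forward identity $\rho^{k+1} = (\Psi_{\Delta t})_{\#}\rho^k$ as a change of variables and then applying the contraction from the first step yields
\begin{equation}
\frac{1}{2}\int_{X} W_2^2(\nu, \bar{\mu}^k)\, d\rho^{k+1}(\nu) = \frac{1}{2}\int_{X} W_2^2\bigl(\Psi_{\Delta t}(\mu), \bar{\mu}^k\bigr)\, d\rho^k(\mu) \leq (1-\Delta t)^2 \mathcal{V}(\rho^k),
\end{equation}
which chains with the previous inequality to give the claim.

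I expect the main obstacle to be purely the pointwise contraction step, specifically justifying that the competitor coupling $(S,T)_{\#}\mu$ is well defined and admissible; once the optimal map $T$ is provided by Brenier under Assumption 2, the remainder is a direct expansion of $|S(x)-T(x)|^2$. The measurability of $\mu \mapsto \Psi_{\Delta t}(\mu)$ and of $\mu \mapsto W_2^2(\Psi_{\Delta t}(\mu), \bar{\mu}^k)$, needed to integrate against $\rho^k$, follows from the stability of optimal maps under perturbations of the source measure and can be invoked without further computation.
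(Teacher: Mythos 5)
Your proposal is correct and follows essentially the same route as the paper: both rest on the minimality of $\bar{\mu}^{k+1}$ for $F_{\rho^{k+1}}$, the change of variables via $\rho^{k+1}=(\Psi_{\Delta t})_{\#}\rho^{k}$, and the contraction $W_{2}^{2}(\Psi_{\Delta t}(\mu),\bar{\mu}^{k})\leq(1-\Delta t)^{2}W_{2}^{2}(\mu,\bar{\mu}^{k})$. The only difference is that you derive this last estimate as an inequality from the explicit competitor coupling $(S,T)_{\#}\mu$, whereas the paper obtains it as an equality by citing the constant-speed geodesic property of the interpolation $(1-\Delta t)\mathrm{Id}+\Delta t\,T$; either justification suffices for the claim.
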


	\begin{proof}
		We start by observing that
		\begin{equation}\label{proof3_eq1}
			\mathcal{V}(\rho^{k+1}) = F_{\rho^{k+1}}(\bar{\mu}^{k+1}) \leq F_{\rho^{k+1}}(\bar{\mu}^{k}) = \frac{1}{2}\int_{X}W_{2}^{2}\left( \mu, \bar{\mu}^{k} \right) d\rho^{k+1}(\mu),
 		\end{equation}
		since $ \bar{\mu}^{k+1}$ is the global minimizer of $ F_{\rho^{k+1}}(\nu) $. Next, we apply the definition of the pushforward on Eq.~\eqref{proof3_eq1}. This yields
		\begin{equation}\label{proof3_eq2}
		\mathcal{V}(\rho^{k+1}) \leq \frac{1}{2}\int_{X}W_{2}^{2}\left( \Psi_{\Delta t}(\mu), \bar{\mu}^{k} \right) d\rho^{k}(\mu).
		\end{equation}
		The map $ \Psi_{\Delta t} $ defines a constant-speed Wasserstein geodesic between $ \mu $ and $ \bar{\mu} $ (Theorem 7.2.2 in~\cite{ambrosioGradientFlows2008}), and therefore satisfies
		\begin{equation}\label{proof3_eq3}
			W_{2}^{2}\left( \Psi_{\Delta t}(\mu), \bar{\mu}^{k} \right) = \left( 1 - \Delta t \right)^{2} W_{2}^{2}\left(\mu, \bar{\mu}^{k} \right).
		\end{equation}   
		Combining \eqref{proof3_eq2} and \eqref{proof3_eq3}, we obtain
		\begin{equation}
			\mathcal{V}(\rho^{k+1}) \leq \frac{1}{2}\left( 1 - \Delta t \right)^{2} \int_{X}W_{2}^{2}\left( \mu, \bar{\mu}^{k} \right) d\rho^{k}(\mu) = \left( 1 - \Delta t \right)^{2}\mathcal{V}(\rho^{k}).
		\end{equation}
	\end{proof}

		
	Proposition~\ref{prop:consensus} demonstrates that the optimal transport formulation of CBO achieves consensus: the variance $\mathcal{V}(\rho^k)$ decays geometrically to zero at a rate of $(1-\Delta t)^{2k}$.

	Variance decay is \emph{necessary but not sufficient} for convergence to a global
	optimizer. Proposition~\ref{prop:consensus} shows the ensemble collapses, but not that
	$\bar{\mu}$ is globally optimal. In classical CBO this link is provided by the Laplace
	principle and a regularity assumption on the objective function; an analogous argument for the OT formulation remains an
	open problem.

	\subsection{Example 4: Square Approximation}\label{sec:mean-field_numerics}
	In the fourth experiment, we investigate whether the optimal transport dynamics proposed in \eqref{sec4:ot_dynamics} are capable of training arbitrarily wide neural networks. To this end, we focus on the empirical risk during training for neural networks with different values of $ M $ and $N$.

	In the implementation of the optimal transport CBO formulation, we consider empirical measures of the form:
	\begin{equation}\label{eq:empirical_measures_experiment}
		\hmu^{k} = \frac{1}{M} \sum_{i=1}^{M} \delta(\bm{w} - \bm{w}_{n,i}^{k})\delta(b - b_{n,i}^{k})\delta(c - c_{n,i}^{k}).
	\end{equation} 
	The dynamics in \eqref{sec4:ot_dynamics}, for $ \mu_n^k $ equal to the empirical measure \eqref{eq:empirical_measures_experiment}, takes the form of a noise-free CBO scheme. Written explicitly for $ (\bm{w}, b,c) $, this gives 	
	\begin{align}
		\bm{w}^{k+1}_{n,i} &= \bm{w}_{n,i}^{k} - \Delta t \left(\bm{w}^{k}_{n, i} - \overline{\bm{w}}^{k}_{\pi^{*}_{n}(i)}\right)\\
		b^{k+1}_{n,i} &= b_{n,i}^{k} - \Delta t \left(b^{k}_{n, i} - \overline{b}^{k}_{\pi^{*}_{n}(i)}\right)\\ 
		c^{k+1}_{n,i} &= c_{n,i}^{k} - \Delta t \left(c^{k}_{n, i} - \overline{c}^{k}_{\pi^{*}_{n}(i)}\right),
	\end{align}
	where $ \pi^*_{n} $ denotes the permutation matrix that represents the optimal coupling of the $ i $-th weight to the barycenter. To facilitate the training, we include a drift parameter $ \lambda $ and add artificial noise to the parameter updates. The complete dynamics are as follows:
		\begin{align}
		\bm{w}^{k+1}_{n,i} &= \bm{w}_{n,i}^{k} - \lambda \Delta t \left(\bm{w}^{k}_{n, i} - \overline{\bm{w}}^{k}_{\pi^{*}_{n}(i)}\right) + \tilde{\sigma}^{k}\sqrt{\Delta t}\bm{\xi}_{n,i}^{k}\label{eq:OT_discrete_w} \\
		b^{k+1}_{n,i} &= b_{n,i}^{k} - \lambda \Delta t \left(b^{k}_{n, i} - \overline{b}^{k}_{\pi^{*}_{n}(i)}\right) + \tilde{\sigma}^{k}\sqrt{\Delta t}\xi_{n,i}^{k} \label{eq:OT_discrete_b} \\ 
		c^{k+1}_{n,i} &= c_{n,i}^{k} - \lambda \Delta t \left(c^{k}_{n, i} - \overline{c}^{k}_{\pi^{*}_{n}(i)}\right) + \tilde{\sigma}^{k}\sqrt{\Delta t}\xi_{n,i}^{k}, \label{eq:OT_discrete_c}
	\end{align}
	with $ \lambda, \tilde{\sigma} > 0$ and $\bm{\xi}_{n,i}^{k} \sim \mathcal{N}\left(\bm{0}, \bm{\mathrm{I}} \right) $. However, the noise lacks a multiplicative term that decreases as the particles converge to the barycenter, similar to the anisotropic noise in \eqref{cbo:sde}. Therefore, we manually reduce $ \tilde{\sigma}^{k} $ according to a predefined schedule.

	We consider a one-dimensional regression problem, where the goal is to approximate the function $ x \mapsto x^2 $ on the domain $ [0,1]$ using a finite two-layer neural network. The neural network has a RELU activation function and we consider various widths $ M $. We sample 5000 data points $ x_s $ uniformly on the interval $ [0,1] $ and apply the function $ y_s = x^2_s + 0.01 \xi_s $, with $ \xi_s \sim \mathcal{N}(0,1)$, to generate the training dataset $\left\{\left( x_{s}, y_{s} \right)\right\}_{s=1}^{5000}$. We divide the training dataset into minibatches of size $ S^\prime$.  We take the squared error loss as the loss function and train the neural networks for 600 epochs. 
	
	The CBO parameters in the experiment are chosen as follows:
	\begin{align*}
		\Delta t = 10^{-1}, \quad \alpha = 10^4, \quad \lambda = 1, \quad \tilde{\sigma} = \sqrt{1.2}, \quad S^\prime = 2500. 
	\end{align*}
	For each particle, we choose a uniform distribution as the initial measure and sample the atoms from $ \bm{w}_{n,i}, b_{n,i}, c_{n,i} \sim \mathcal{U}[-2,2] $. Every 100 epochs, we reduce the noise parameter by a factor of 0.9 every 100 iterations and multiply $ \alpha $ by 10 until it reaches $ 10^7 $. Finally, for each value of $ M $ and $N$, we perform 10 independent simulations and report the average. 

	Figure~\ref{fig:mean_field_M} shows the empirical risk at epoch 600 as a function
	of the network width $M$, with the number of particles fixed at $N = 200$.
	Figure~\ref{fig:mean_field_N} shows the corresponding dependence on $N$, with the
	width fixed at $M = 200$. In both cases the empirical risk decreases monotonically and appears to saturate.

	These experiments confirm that the optimal transport CBO formulation is capable of
	training neural networks. The monotone decrease in $M$ is consistent with the
	infinite-width mean-field model of the neural network, while the monotone decrease
	in $N$ corroborates the mean-field model of CBO. The saturation of both curves indicates that $M = N = 200$ is already close to the respective mean-field regimes. However, computing the barycenter each iteration has a higher computational cost than the classical consensus point, limiting the practical usage of the optimal transport formulation.

	\begin{minipage}[t]{0.5\linewidth}
		\centering
		\includegraphics[width=\linewidth]{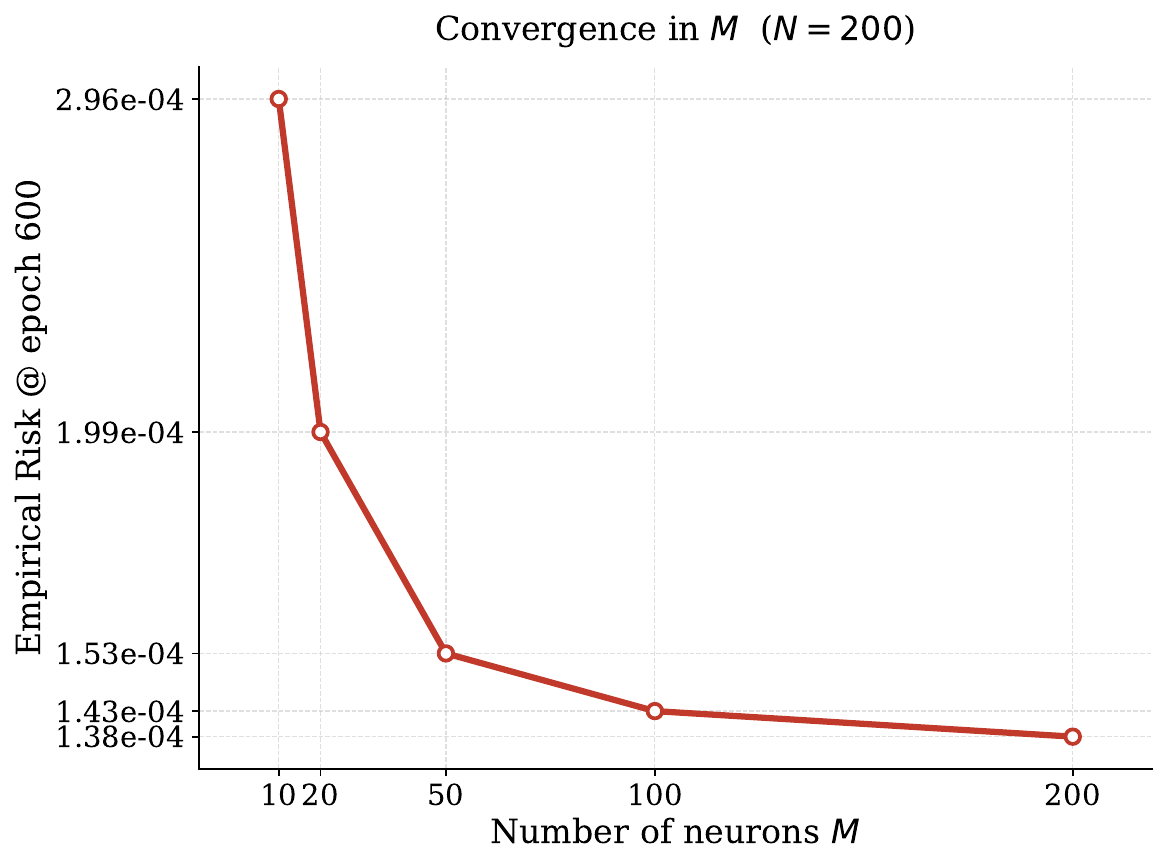}
		\captionsetup{justification=centering}
		\captionof{figure}{Empirical risk $\hat{R}(\mu)$ at epoch 600 as a function of the network width $M$,
		with $N = 200$ particles fixed. Each neural network is represented by a measure and
		trained with the optimal transport formulation of CBO. Results are averaged over
		10 independent runs.}
		\label{fig:mean_field_M}
	\end{minipage}
	\hfill
	\begin{minipage}[t]{0.5\linewidth}
		\centering
		\includegraphics[width= \linewidth]{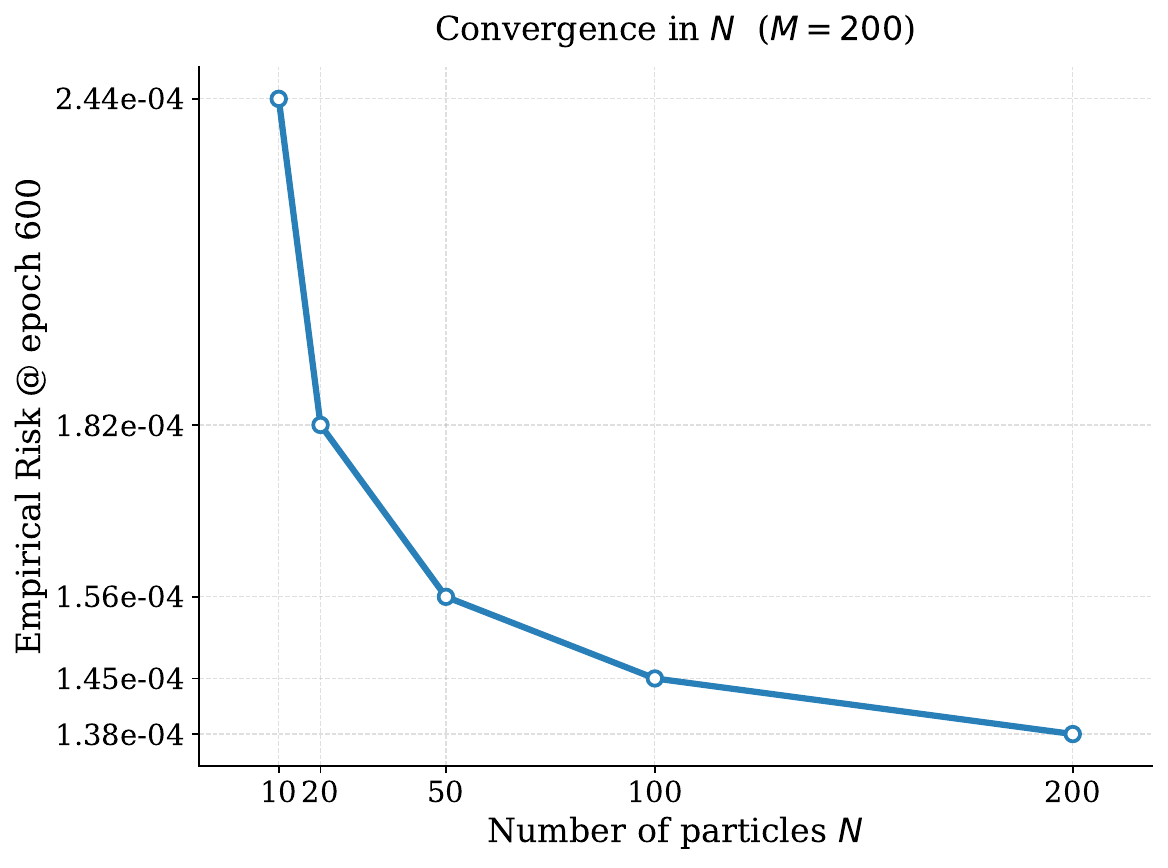}
		\captionsetup{justification=centering}
		\captionof{figure}{Empirical risk $\hat{R}(\mu)$ at epoch 600 as a function of the number of
		particles $N$, with network width $M = 200$ fixed. Each neural network is
		represented by a measure and trained with the optimal transport formulation of CBO.
		Results are averaged over 10 independent runs.}
		\label{fig:mean_field_N}
	\end{minipage}

	\section{Conclusion}\label{section6}
	This work investigated training two-layer neural networks with Consensus-Based Optimization and analyzed its mean-field models. On a smooth regression task, CBO achieved competitive final empirical risk values compared to Adam, while the hybrid method on MNIST classification demonstrated greater robustness and faster convergence than Adam. We hypothesize that the CBO method achieves faster convergence for highly non-convex risk functions, whereas in cases of slow convergence, incorporating local gradient information can be beneficial. In the multi-objective optimization setting, Multi-Task CBO achieves high accuracy with minimal memory overhead by recycling particles across all tasks. 
	
	On the theoretical side, we reformulated the CBO dynamics on $ \R^{M(d+2)} $ within the Wasserstein space $ \Pw $, thereby enabling the training of continuous neural networks with a particle-based method. However, the optimal transport dynamics are currently deterministic, as they do not include Brownian motion. In practice, artificial noise can always be added when considering empirical measures. We presented a time-discrete mean-field formulation over both the neurons ($M \to \infty$) and the particles ($N \to \infty$) and proved that the optimal transport scheme achieves variance decay (Proposition~\ref{prop:consensus}). We emphasize that variance decay establishes consensus among particles but does not, by itself, imply convergence to a global minimizer of the risk; connecting these two notions in the OT setting is an important open problem. Lastly, we illustrated numerically that the empirical risk decreases monotonically in $M$ and in $N$ individually, in agreement with the respective mean-field models.

	\paragraph{Limitations and future work.} The current framework is restricted to two-layer neural networks. Extending the mean-field analysis to deeper architectures ($k$-layer networks, $k > 2$) is a natural direction for future work: this requires a measure-based representation of deep networks and corresponding Barron-type approximation theory, for which some first steps are available in~\cite{eBarronSpaceFlowInduced2022}. On the optimization side, incorporating a rigorous diffusion term into the OT dynamics, or establishing a rigorous proof of convergence, would significantly strengthen the theoretical foundations of the proposed approach.

	\section*{Acknowledgements}
	The work of W.D.D. is supported by the European Union’s Horizon Europe research and innovation program under the Marie Sklodowska-Curie Doctoral Network Datahyking (Grant No. 101072546). The authors thank the Deutsche Forschungsgemeinschaft (DFG, German Research Foundation) for the financial support through 442047500/SFB1481 within the projects B04 (Sparsity fördernde Muster in kinetischen Hierarchien), B05 (Sparsifizierung zeitabhängiger Netzwerkflußprobleme mittels diskreter Optimierung) and B06 (Kinetische Theorie trifft algebraische Systemtheorie). The authors thank the Deutsche Forschungsgemeinschaft (DFG, German Research Foundation) for the financial support HE5386/33-1 Control of Interacting Particle Systems, and Their Mean-Field, and Fluid-Dynamic Limits (560288187), and  HE5386/34-1 Partikelmethoden für unendlich dimensionale Optimierung ( 561130572). This research was partially funded by the KU Leuven Research Fund under grant C14/23/098.


	\bibliographystyle{abbrv}
	\bibliography{references}

\end{document}